\documentclass{article}

\usepackage{microtype}
\usepackage{graphicx}
\usepackage{subcaption}
\usepackage{tabularx}
\usepackage{booktabs} 
\usepackage[dvipsnames]{xcolor}
\usepackage{kotex}

\usepackage{xcolor}
\usepackage{pifont}
\usepackage{makecell}
\usepackage{algorithm}
\usepackage{algorithmic} 
\usepackage[most]{tcolorbox}
\usepackage{enumitem}
\usepackage{hyperref}

\newcommand{\rd}{\mathrm{d}}

\usepackage[preprint]{icml2026}

\usepackage{amsmath}
\usepackage{amssymb}
\usepackage{mathtools}
\usepackage{amsthm}

\usepackage[capitalize]{cleveref}
\crefname{figure}{Fig.}{Figs.}
\crefname{section}{Sec.}{Secs.}
\crefname{appendix}{App.}{Apps.}
\crefname{table}{Tab.}{Tabs.}
\Crefname{equation}{Eq.}{Eqs.}

\theoremstyle{plain}
\newtheorem{theorem}{Theorem}[section]
\newtheorem{proposition}[theorem]{Proposition}

\theoremstyle{definition}

\theoremstyle{remark}

\usepackage[textsize=tiny]{todonotes}

\icmltitlerunning{Efficient Generative Modeling beyond Memoryless Diffusion via Adjoint Schrödinger Bridge Matching}

\begin{document}

\twocolumn[
  \icmltitle{Efficient Generative Modeling beyond Memoryless Diffusion via\\ Adjoint Schrödinger Bridge Matching}



  \icmlsetsymbol{equal}{*}
    \icmlsetsymbol{cor}{$\dagger$}  

    \begin{icmlauthorlist}
        \icmlauthor{Jeongwoo Shin}{snu}
        \icmlauthor{Jinhwan Sul}{gatech}
        \icmlauthor{Joonseok Lee}{cor,snu}
        \icmlauthor{Jaewoong Choi}{cor,skku}
        \icmlauthor{Jaemoo Choi}{cor,gatech}
    \end{icmlauthorlist}
    
    \icmlaffiliation{snu}{Seoul National University}
    \icmlaffiliation{gatech}{Georgia Institute of Technology}
    \icmlaffiliation{skku}{Sungkyunkwan University}
    
    \icmlcorrespondingauthor{Jaemoo Choi}{jaemoo.choi@gatech.edu}
    \icmlcorrespondingauthor{Jaewoong Choi}{jaewoongchoi@skku.edu}
    \icmlcorrespondingauthor{Joonseok Lee}{joonseok@snu.ac.kr}

  \icmlkeywords{Machine Learning, ICML}

  \vskip 0.3in
]



\printAffiliationsAndNotice{}  
\newcommand{\OURS}{ASBM} 
\begin{abstract}
Diffusion models often yield highly curved trajectories and noisy score targets due to an uninformative, memoryless forward process that induces independent data-noise coupling. We propose \textbf{Adjoint Schrödinger Bridge Matching (ASBM)}, a generative modeling framework that recovers optimal trajectories in high dimensions via two stages. First, we view the Schrödinger Bridge (SB) forward dynamic as a coupling construction problem and learn it through a data-to-energy sampling perspective that transports data to an energy-defined prior. Then, we learn the backward generative dynamic with a simple matching loss supervised by the induced optimal coupling. By operating in a non-memoryless regime, ASBM produces significantly straighter and more efficient sampling paths. Compared to prior works, ASBM scales to high-dimensional data with notably improved stability and efficiency. Extensive experiments on image generation show that ASBM improves fidelity with fewer sampling steps. We further showcase the effectiveness of our optimal trajectory via distillation to a one-step generator.
\end{abstract}

  










\section{Introduction}

Generative modeling aims to sample from a data distribution $p_{\text{data}}$ by transforming a simple prior distribution $p_{\text{prior}}$ (\textit{e.g.}, Gaussian). 
Diffusion models achieve this by learning continuous dynamic based on Stochastic Differential Equation (SDE) that connect $p_{\text{prior}}$ and $p_{\text{data}}$~\cite{song2020sm,ho2020ddpm,song2020ddim}. 
While highly successful, these methods face two significant limitations~\citep{song2020sm, karras2022edm}.
First, the learned trajectories are often highly curved, requiring a large Number of Function Evaluations (NFEs) at sample generation.
Second, their training objectives typically rely on independent endpoint pairing $(X_0,X_1)\sim p_{\text{data}}\times p_{\text{prior}}$, which yields noisy training targets and slow convergence. This independent coupling is induced by the \emph{memoryless forward process}~\cite{domingo2024am} (cf. \cref{eq/sec3/memoryless_base}).

\begin{figure}[t]
    \vspace{-0.1cm}
    \centering
    \includegraphics[width=\linewidth]{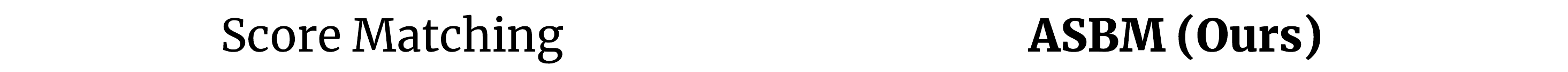}\\[-.2cm]
    \includegraphics[width=.49\linewidth]{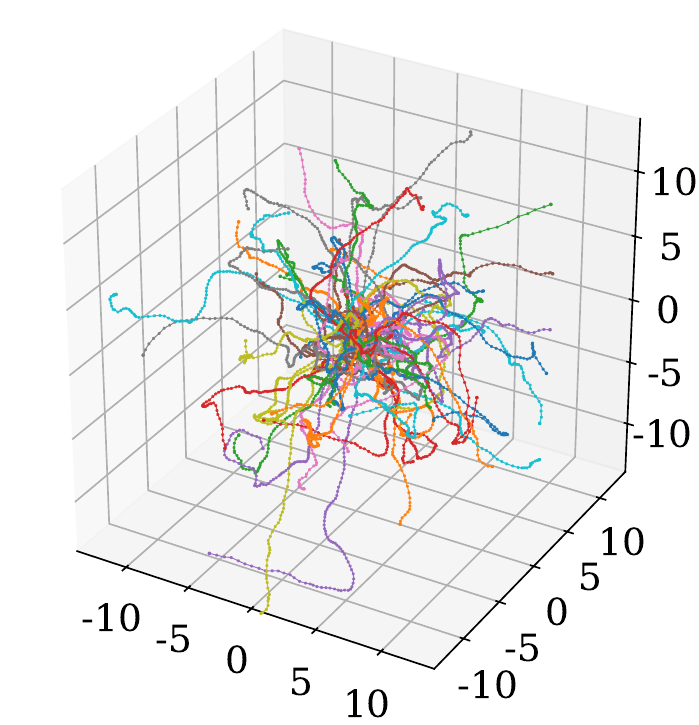}
    \includegraphics[width=.49\linewidth]{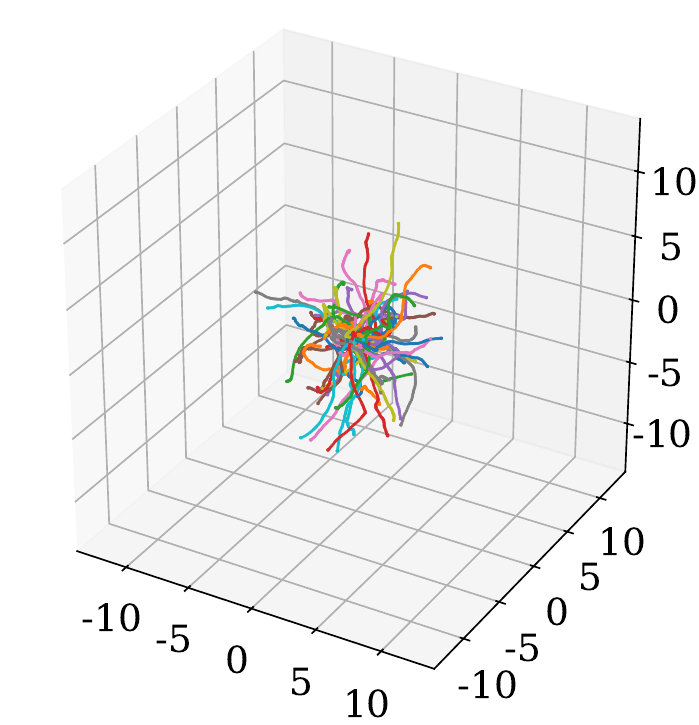}\\[.2cm]
    \includegraphics[width=\linewidth]{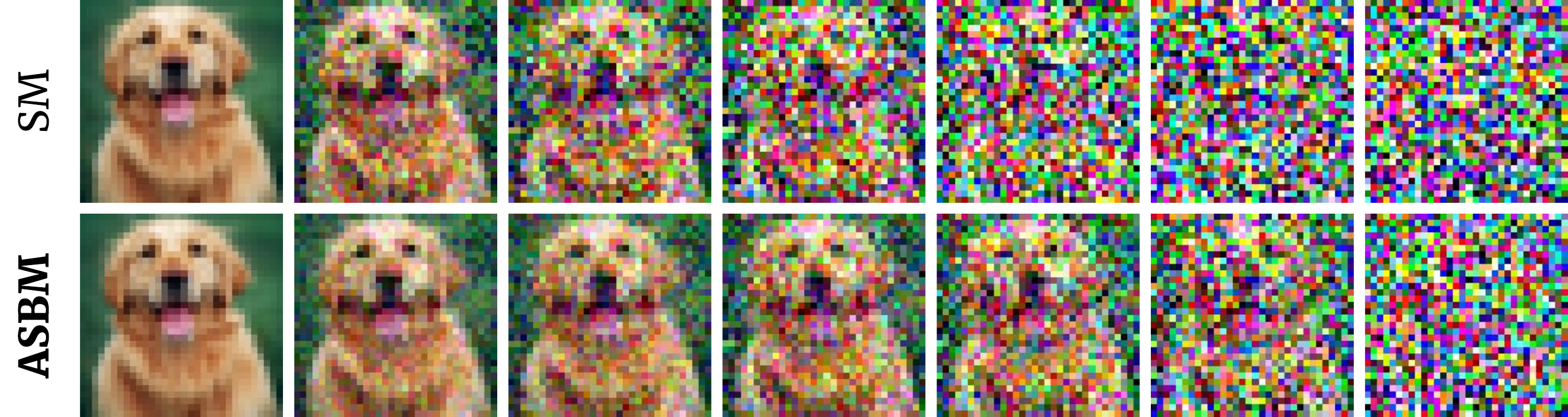}
    \caption{\textbf{Generation trajectory of score matching and ASBM.} \emph{Top}: Backward drift accumulated over time in pixel level. \emph{Bottom}: Denoising path in image level. ASBM shows significantly smaller transport cost with straighter path, leading to efficient generation.}
    \vspace{-0.5cm}
    \label{fig:main_cost}
\end{figure}
Optimal Transport (OT)~\cite{villani2008ot1, peyre2019ot2} provides a powerful alternative by seeking an \emph{optimal coupling} between $X_0 \sim p_{\text{data}}$ and $X_1 \sim p_{\text{prior}}$ that minimizes a transport cost. Among various OT formulations, the Schr\"{o}dinger Bridge 
(SB) problem~\cite{schrodinger1931sb0,leonard2013sb2,chen2016sb4,chen2021sb1,de2021dsb3} is particularly relevant to diffusion models, because the optimal coupling is represented by a pair of consistent forward-backward SDE dynamics. Unlike the trivial independent coupling in standard diffusion, the cost-minimizing principle of SB induces the shortest path between $p_{\text{data}}$ and $p_{\text{prior}}$.
By generalizing the memoryless forward process into the \emph{non-memoryless} regime, SB can achieve optimal trajectories that are straighter than those obtained from independent endpoint pairing, leading to less NFEs for generation.

However, realizing the non-memoryless SB in high-dimensional settings, such as images, remains challenging.
Existing SB-inspired generative methods~\cite{chen2021sbfbsde,deng2024vsdm} often resort to independent pairing $(X_0,X_1) \sim p_{\text{data}}\times p_{\text{prior}}$, 
or perform auxiliary pretraining with empirical bridge matching~\cite{shi2023dsbm},
limiting primary benefits of the optimal trajectories.
Furthermore, they typically use forward--backward alternating training, requiring bidirectional trajectory rollouts to supervise each other. In practice, this bidirectional supervision is noisy and often leads to inconsistent dynamics that do not correspond to a single optimal path measure, which weakens the intended OT property and reduces sampling efficiency.

In this paper, we introduce \emph{\textbf{Adjoint Schr\"{o}dinger Bridge Matching (ASBM)}}, an SB-based generative modeling framework that efficiently learns organized trajectories under the informative endpoint couplings with highly stable convergence. 
We decompose generative modeling via Schrödinger Bridges into two simple subproblems: (i) constructing the endpoint optimal coupling using data-to-prior forward dynamic, and (ii) optimizing the backward dynamic with a simple matching loss supervised by the resulting optimal coupling.
At stage (i), we view the SB problem as a Stochastic Optimal Control (SOC)-based sampling problem~\cite{zhang2021pis,vargas2023dds,havens2025as,liu2025asbs}, which learns the optimal control that transports initial data $p_{\text{data}}$ to desired Boltzmann distribution, \emph{i.e.}, unnormalized density with a known energy function.
Since forward dynamic in SB is the transport from samplable distribution, \emph{i.e.}, $p_{\text{data}}$, to energy-known $p_{\text{prior}}$, \emph{e.g.}, Gaussian, the problem can be reformulated as a data-to-energy sampling problem and efficiently solved. 
At stage (ii), we use the learned forward process to obtain the optimal SB coupling $(X_0, X_1)$ and train the backward dynamic with a simple matching loss.
Interestingly, our approach also recovers the standard diffusion model as a special case with a specific forward dynamic which yields independent endpoint coupling without training (See~\cref{sec/method/motivation}). 
Our design brings three key advantages.
First, ASBM requires only the forward simulation at training.
Since it transports from data to a simple prior, it yields stable and fast optimization, requiring dramatically fewer NFEs (\textit{e.g.}, 20 vs. 100–200 in prior work) to construct endpoint couplings, sufficiently with a lighter network.
This stable forward training produces higher-quality optimal couplings, enabling more informative supervision for backward dynamic.
Second, since optimal coupling is induced by learned forward dynamic, we can optimize backward dynamic via simple matching loss which converges fast with high stability.
Third, ASBM's straighter trajectory results in improved generative performance with lower NFE compared to prior SB-based generative models and diffusion model.
We further showcase the effectiveness of our efficient trajectory via distillation to one-step generator, with better performance and mode coverage.


Our contributions are three-fold:
\begin{itemize}[leftmargin=5mm, topsep=0pt]
  \setlength{\itemsep}{0pt}
  \setlength{\parskip}{0pt}
    \item We propose \emph{\textbf{Adjoint Schr\"{o}dinger Bridge Matching (ASBM)}}, which learns \emph{optimal trajectory} in significantly \emph{efficient and stable} manner through our novel perspective on Schr\"{o}dinger Bridge optimization.
    \item ASBM achieves superior performance over diffusion model and prior SB methods on image generation with faster sampling (low NFE) and better fidelity.
    \item Leveraging the efficient trajectory of ASBM, we improve the sample quality and \emph{mode coverage} in distillation task, compared to score-based distillation.
\end{itemize}

\section{Background}
\label{sec:background}




\textbf{Diffusion Models (DMs)}~\cite{song2020sm, ho2020ddpm} learn to sample from a target data distribution $p_{\text{data}}$ by reversing a fixed forward noising process. This forward process is designed to describe the stochastic dynamic from the data distribution $p_{\text{data}}$ to the prior distribution $p_{\text{prior}}$, typically a Gaussian. Specifically, consider a forward Stochastic Differential Equation (SDE) 
for $X_{t} \in \mathbb{R}^d$:
\begin{equation}
\rd X_t = f^\text{DM}_t(X_t)\,\rd t + \sigma^\text{DM}_t\rd W_t,
\qquad X_0 \sim p_{\text{data}},
\label{eq/sec2/base_sde_dm}
\end{equation}
where $f^\text{DM} : [0,1]\,\times\, \mathbb{R}^d \rightarrow \mathbb{R}^d$ is the base drift and $\sigma^\text{DM} : [0,1] \rightarrow \mathbb{R}_{>0}$ is the noise schedule.
This forward SDE
has a corresponding reverse-time SDE~\cite{song2020sm} that follows the same stochastic dynamic backward in time:
\begin{equation*}
\scalebox{1.0}{$
\rd X_t = \big[f^\text{DM}_t(X_t) - {\sigma_t^\text{DM}}^2\nabla \log p_t(X_t)\big]\rd t + \sigma^\text{DM}_t\rd W_t,
$}
\label{eq/sec2/sm_bwd_sde}
\end{equation*}
for $X_1 \sim p_{1}$, where $\nabla_x \log p_t(x)$ is a marginal score and $p_1 \approx p_{\text{prior}}$ under a proper noise schedule in \cref{eq/sec2/base_sde_dm}.
DMs estimate this marginal score via conditional score matching~\cite{song2020sm,ho2020ddpm} and generate samples by simulating the reverse SDE with an approximate score $s_{t}$ starting from $p_{\text{prior}}$:
\begin{equation}
\scalebox{0.97}{$
\min\limits_{s} \mathbb{E}_{\,p_{t|0},\,X_0\sim p_{\text{data}}}
\Bigl[
\bigl\|
s_{t}(X_t)
-\nabla_{x_t}\log p(X_t\mid X_0)
\bigr\|^2
\Bigr],
$}
\label{eq/sec2/sm}
\end{equation}
where $s:[0,1] \times \mathbb{R}^d\rightarrow \mathbb{R}^d$ is a score function to approximate the marginal score.

\textbf{Schrödinger Bridge.}
Consider a controlled SDE as
\begin{equation}
\label{eq/sec2/controlled_sde_fwd}
\scalebox{0.95}{$
\rd X_t = \big[f_t(X_t) + \sigma_t u_{t}^{\theta}(X_t)\big]\,\rd t + \sigma_t\,\rd W_t, \ X_0 \sim p_{\text{data}},
$}
\end{equation}
where ${u}^{\theta}: [0,1]\,\times\, \mathbb{R}^d \rightarrow \mathbb{R}^d$ is a parameterized forward control.
$p^{u}$ is the path measure induced by controlled SDE in \cref{eq/sec2/controlled_sde_fwd}, and the base path measure $p^{\text{base}}$ is induced by the uncontrolled base SDE, \emph{i.e.}, by setting $u\equiv0$ in \cref{eq/sec2/controlled_sde_fwd}.

Schrödinger Bridge (SB) problem finds a path measure $p^{u}$ that matches both endpoint marginals, $p_\text{{data}}$ and $p_{\text{prior}}$,  while minimizing the KL divergence relative to a reference base process $p^{\text{base}}$ \cite{schrodinger1931sb0,leonard2013sb2}:
\begin{equation}
\min\limits_{u} D_{\mathrm{KL}}\!\left(p^{u}\,\|\,p^{\mathrm{base}}\right)
\!=\!\! {\mathbb{E}_{\,p^{u}}}\!\left[\int_{0}^{1} \frac{1}{2}\,\|{u}_{t}^{\theta}(X_t)\|^{2}\,dt\right],
\label{eq/sec2/sb_soc_eq}
\end{equation}
subject to \cref{eq/sec2/controlled_sde_fwd} and $X_1 \sim p_{\text{prior}}$.
The optimal bridge admits a reverse-time SDE representation with the same boundary constraints:
\begin{equation}
\label{eq/sec2/controlled_sde_bwd}
\rd X_t \!=\! \big[f_t(X_t) - \sigma_t v_{t}^{\phi}(X_t)\big]\,\rd t + \sigma_t\,\rd W_t, \ \ X_1 \!\sim\! p_{\text{prior}},
\end{equation}
where $v_{t}^{\phi}(x) : [0,1]\,\times\, \mathbb{R}^d \rightarrow \mathbb{R}^d$ is a parameterized backward control. Optimal path measure $p^\star$ can be induced by both direction with optimal controls $u_t^\star$ or $v_t^\star$.

\textbf{Reciprocal Property of SB.}
Optimal path measure $p^\star$ follows a \emph{reciprocal process} \citep{leonard2014reciprocal}:
\begin{equation}
\label{eq/sec2/reciprocal_process}
p^\star(X_{t})=p^{\mathrm{base}}(X_t|X_0,X_1)\,p^\star(X_0,X_1).
\end{equation}
This representation indicates that the optimal path measure is characterized by the optimal coupling $p^\star(X_0, X_1)$. Given the coupling of two terminal distributions, the intermediate distribution can be constructed from the base process.






\section{Method} \label{sec/method}
We introduce a generative model that generalizes the standard \textbf{Diffusion Models (DMs)} into a \textbf{non-memoryless regime} by leveraging the Schrödinger Bridge (SB) problem.
Our approach addresses the fundamental inefficiencies of diffusion models, \emph{i.e.}, curved trajectories and noisy training targets, by explicitly recovering optimal transport couplings. 

\subsection{Diffusion Models as Memoryless SB}

\label{sec/method/motivation}
\textbf{Bridge Matching.}
As shown in \citet{shi2023dsbm}, given the optimal coupling $(X_0,X_1)\sim p^\star_{0,1}$, we could obtain the optimal backward control $v^\star_t$ by solving 
\begin{equation}
\scalebox{0.93}{$
\min\limits_{\phi}{\mathbb{E}_{\,p^{\text{base}}_{t\mid 0,1}\,p^{\star}_{0,1}}}
\Bigl[
\bigl\|
v_{t}^{\phi}(X_t)
-\sigma_{t}\nabla \log p^{\text{base}}_{t|0}(X_t\mid X_0)
\bigr\|^2
\Bigr]
$}.
\label{eq/sec3/bm_bwd}
\end{equation}
This property implies that if the optimal joint distribution $p^{\star}(X_0, X_{1})$ is accessible, we can optimize the backward control $v_{t}^{\phi}$ via bridge matching~\cite{liu2023i2sb}.

\textbf{Connection between DMs and SB.}
In DMs, we set the base drift and diffusion term $(f^{\text{DM}}, \sigma^{\text{DM}})$ so that the $X_1$ sampled from $p^{\text{base}}_{1|0}(\cdot|X_0)$ is almost independent to $X_0$, \textit{i.e.},
\begin{equation}
    \label{eq/sec3/memoryless_base}
    p^{\text{base}}_{0,1}(X_0,X_1)\overset{\text{memoryless}}{:=}p^{\text{base}}_0(X_0)\,p^{\text{base}}_1 (X_1).
\end{equation}
We denote this condition as a \textit{memoryless condition}, and underlying dynamics as a \textit{memoryless dynamics}. Then, the following theorem holds:
\begin{proposition}
\label{prop/sec3/optimal_memoryless}
If the base path measure $p^{\text{base}}$ is memoryless, then the optimal path measure $p^\star$ of the SB problem is also memoryless, \textit{i.e.},
{\setlength{\abovedisplayskip}{1pt}
 \setlength{\belowdisplayskip}{1pt}
 \setlength{\abovedisplayshortskip}{1pt}
 \setlength{\belowdisplayshortskip}{1pt}
\begin{equation}
p^\star(X_0,X_1)\overset{\textnormal{memoryless}}{=}p_{\textnormal{data}}(X_0)\,p_{\textnormal{prior}}(X_1).
\end{equation}}
\end{proposition}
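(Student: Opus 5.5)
We reduce the dynamic SB problem in \cref{eq/sec2/sb_soc_eq} to its static counterpart on the endpoint coupling and then exploit the product structure of the base coupling in \cref{eq/sec3/memoryless_base}. The first step is the standard disintegration of the KL divergence between path measures. For any path measure $p$ with endpoint marginal $p_{0,1}$, we have
\begin{equation*}
D_{\mathrm{KL}}(p\,\|\,p^{\mathrm{base}}) = D_{\mathrm{KL}}(p_{0,1}\,\|\,p^{\mathrm{base}}_{0,1}) + \mathbb{E}_{p_{0,1}}\big[D_{\mathrm{KL}}(p_{\cdot\mid 0,1}\,\|\,p^{\mathrm{base}}_{\cdot\mid 0,1})\big].
\end{equation*}
The second term vanishes exactly when the bridges of $p$ agree with those of the base process. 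This is the reciprocal structure in \cref{eq/sec2/reciprocal_process}. Consequently, the SB optimum is $p^\star = p^\star_{0,1}\,p^{\mathrm{base}}_{\cdot\mid 0,1}$, where $p^\star_{0,1}$ minimizes $D_{\mathrm{KL}}(\pi\,\|\,p^{\mathrm{base}}_{0,1})$ over couplings $\pi$ with marginals $p_{\text{data}}$ and $p_{\text{prior}}$.

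The second step solves this static problem under the memoryless assumption $p^{\mathrm{base}}_{0,1} = p^{\mathrm{base}}_0\otimes p^{\mathrm{base}}_1$. For any admissible $\pi$ we decompose
\begin{equation*}
D_{\mathrm{KL}}(\pi\,\|\,p^{\mathrm{base}}_0\otimes p^{\mathrm{base}}_1) = D_{\mathrm{KL}}(\pi\,\|\,p_{\text{data}}\otimes p_{\text{prior}}) + D_{\mathrm{KL}}(p_{\text{data}}\,\|\,p^{\mathrm{base}}_0) + D_{\mathrm{KL}}(p_{\text{prior}}\,\|\,p^{\mathrm{base}}_1).
\end{equation*}
This identity follows by writing $\log\frac{\pi}{p^{\mathrm{base}}_0 p^{\mathrm{base}}_1} = \log\frac{\pi}{p_{\text{data}}p_{\text{prior}}} + \log\frac{p_{\text{data}}}{p^{\mathrm{base}}_0} + \log\frac{p_{\text{prior}}}{p^{\mathrm{base}}_1}$ and integrating against $\pi$. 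The last two integrals reduce to marginal KLs because $\pi$ has the prescribed marginals.

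The last two terms do not depend on $\pi$. The first term is the mutual information of $\pi$, which is nonnegative and equals zero if and only if $\pi = p_{\text{data}}\otimes p_{\text{prior}}$. The product coupling is admissible, so it is the unique minimizer, which gives $p^\star(X_0,X_1) = p_{\text{data}}(X_0)\,p_{\text{prior}}(X_1)$ as claimed.

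We expect the main obstacle to be the first step rather than the algebra. We must justify that the optimizer of the controlled problem \cref{eq/sec2/sb_soc_eq}, which ranges over Markov controlled diffusions, coincides with the optimizer over all path measures. We also need the objective to be finite, i.e., $D_{\mathrm{KL}}(p_{\text{data}}\|p^{\mathrm{base}}_0)$ and $D_{\mathrm{KL}}(p_{\text{prior}}\|p^{\mathrm{base}}_1)$ must be finite. Here $p^{\mathrm{base}}_0 = p_{\text{data}}$ since the base SDE starts from data. For the remaining points we would invoke the classical result of \citet{leonard2013sb2}: the mixture $\pi\,p^{\mathrm{base}}_{\cdot\mid 0,1}$ is Markov when $\pi$ is of the Schrödinger form $\pi = e^{\varphi(x_0)+\psi(x_1)}p^{\mathrm{base}}_{0,1}$. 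The product coupling is of this form under a memoryless base, with $\varphi = \log(p_{\text{data}}/p^{\mathrm{base}}_0)$ and $\psi = \log(p_{\text{prior}}/p^{\mathrm{base}}_1)$. Hence it is realized by a controlled SDE of the form \cref{eq/sec2/controlled_sde_fwd}, and the relaxation is tight.
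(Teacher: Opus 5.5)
Your proof is correct, but it works on the primal (variational) side, whereas the paper's works on the control side. The paper invokes the SOC characterization $p^\star(X_0,X_1)=p^{\mathrm{base}}(X_0,X_1)\exp\bigl(-g(X_1)+V_0(X_0)\bigr)$ with $V_0(x)=-\log\mathbb{E}_{p^{\mathrm{base}}}[e^{-g(X_1)}\mid X_0=x]$. It notes that memorylessness makes $V_0$ constant in $x$, and reads off $p^\star(X_0,X_1)\propto p^{\mathrm{base}}_0(X_0)\,p^{\mathrm{base}}_1(X_1)e^{-g(X_1)}$. For the memoryless terminal cost $g=\log(p^{\mathrm{base}}_1/p_{\mathrm{prior}})$ this equals $p_{\mathrm{data}}\otimes p_{\mathrm{prior}}$.

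You instead reduce to the static problem by disintegrating the path KL. You then split $D_{\mathrm{KL}}(\pi\,\|\,p^{\mathrm{base}}_0\otimes p^{\mathrm{base}}_1)$ into the mutual information of $\pi$ plus two $\pi$-independent marginal terms. Your route needs no HJB/Feynman--Kac machinery. It also does not need a terminal cost known in advance whose SOC optimum satisfies $X_1\sim p_{\mathrm{prior}}$, a link the paper's proof uses only implicitly. It gives uniqueness and both marginals directly. It also surfaces the hypothesis $D_{\mathrm{KL}}(p_{\mathrm{prior}}\,\|\,p^{\mathrm{base}}_1)<\infty$, which is in fact the optimal value.

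The paper's route never leaves the class of controlled SDEs, so it needs no realizability step. It also plugs directly into the value-function and SB-potential language used by AM/CM.

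The two arguments meet in your last step, which is best justified by the explicit $h$-transform rather than by Markovness alone. Since $p^{\mathrm{base}}_0=p_{\mathrm{data}}$, you have $p^\star=e^{\psi(X_1)}p^{\mathrm{base}}$. Girsanov then gives the Markov control $u_t=\sigma_t\nabla\log h_t$ with $h_t(x)=\mathbb{E}_{p^{\mathrm{base}}}[e^{\psi(X_1)}\mid X_t=x]=e^{-V_t(x)}$. Memorylessness gives $h_0\equiv 1$, which is precisely the paper's ``$V_0$ is constant.''
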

Consequently, under the memoryless condition, the expectation in  \cref{eq/sec3/bm_bwd} reduces to
\begin{equation}
     p^\text{base}_{t|0,1}\cdot p^\star_{0,1}  \ \ \Leftrightarrow \ \ p^\text{base}_{t|0,1}\cdot p_{\text{prior}} \cdot p_{\text{data}}\ \  \Leftrightarrow \ \ p^\text{base}_{t|0} \cdot p_{\text{data}},
     \label{eq/sec3/sb_recip_to_sm}
\end{equation}
which is exactly the same form as the score matching objective in \cref{eq/sec2/sm}.
This indicates that the diffusion model is a special case of SB, where the forward dynamic in \cref{eq/sec2/base_sde_dm} is a fixed memoryless base dynamic.

\textbf{Limitation of Memoryless Base SDE.}
This interpretation shows the fundamental inefficiency of the score matching. In the memoryless regime, the injection of \emph{massive} noise makes the matching target $\nabla_{x_t}\log p^{\text{base}}(X_t\mid X_0)$ highly stochastic.
This leads to a slow convergence and highly curved backward path, leading to numerous function evaluations for generating high-quality samples~\cite{lipman2022fm, karras2022edm}.
In other words, as endpoint couplings are independent from each other, it is not informative to learn effective generation path.
Therefore, we propose a method to obtain an informative optimal coupling $p^\star(X_0,X_1)$ to more efficiently supervise our backward bridge matching to learn straighter 
generation path.




\subsection{Adjoint Schr\"odinger Bridge Matching}
\label{sec/method/main}

To break through the limitations of the memoryless dynamics, we adopt a \textbf{\emph{non-memoryless}} base SDE to induce informative optimal couplings, and thereby learn efficient generation trajectory.
Our core contribution is a \textbf{\emph{decoupled optimization of forward-backward dynamics}} in a two-stage process: 
1) \textbf{Optimal Coupling Construction} for the forward process by a novel interpretation of it as a \emph{data-to-energy} sampling problem, and 2) \textbf{Backward Dynamic Optimization} via simple matching loss using \emph{reciprocal process} under optimal coupling $p^\star(X_0,X_1)$.

\textbf{Optimal Coupling Construction.} 
Since non-memoryless base SDE no longer transports $X_0$ to $p_{\text{prior}}$ on its own, we need to optimize additional forward control ${u}_t^\theta$ in \cref{eq/sec2/controlled_sde_fwd} to enforce the terminal marginal $p_{\text{prior}}$.
Stochastic Optimal Control (SOC)-based sampling problem~\cite{zhang2021pis,vargas2023dds,havens2025as,liu2025asbs}  finds the optimal control which tilts the base SDE to transport a samplable distribution to a target Boltzmann distribution, which is known up to its energy function. 

Our \textbf{novel perspective} is that, within the generative modeling framework, the forward dynamic in the SB problem~\eqref{eq/sec2/controlled_sde_fwd} can be seen as a \emph{data-to-energy} sampling problem.
In this viewpoint, the forward process transports from 
an empirical distribution $p_{\text{data}}$ to an
energy-known distribution $p_{\text{prior}}$, \emph{e.g.}, Gaussian. 
This is highly beneficial because the energy gradient provides a dense, point-wise characterization of $p_{\text{prior}}$, whereas empirical supervision relies on finite samples and can only specify the target distribution through sparse Monte Carlo estimates.
This reformulation allows us to isolate the coupling construction part from the unstable alternating optimization of forward-backward system. 


Then, our first goal is finding the \emph{optimal control} ${u}_{t}^\star$ of sampling problem.
Under the SB optimality~\cite{pavon1991sb_opt,chen2021sb_opt,caluya2021sb_opt}, the optimal controls can be characterized by 
\begin{equation}
\scalebox{0.95}{$
\label{eq/sec3/optimal_control}
{u}_{t}^\star(x)=\sigma_t \nabla_x \log\varphi_t(x),\quad {v}_{t}^\star(x)=\sigma_t \nabla_x \log\hat\varphi_t(x),
$}
\end{equation} 
where $\varphi_t,\hat\varphi_t \in C^{1,2}([0,1],\mathbb{R}^d)$ are SB potentials satisfying
\vspace{-0.5em}
\begin{equation*}
\label{eq/sec3/sb_potentials}
\scalebox{0.95}{$
\begin{aligned}
\varphi_t(x) &= \int p^{\text{base}}_{1|t}(y\mid x)\,\varphi_1(y)\,dy,
& \varphi_0(x)\,\hat{\varphi}_0(x) &= p_{\text{prior}}(x), \\
\hat{\varphi}_t(x) &= \int p^{\text{base}}_{t|0}(x\mid y)\,\hat{\varphi}_0(y)\,dy,
& \varphi_1(x)\,\hat{\varphi}_1(x) &= p_{\text{data}}(x).
\end{aligned}
$}
\end{equation*}
Optimal controls in \cref{eq/sec3/optimal_control} are typically obtained by alternating optimization of $\varphi_t$ and $\hat\varphi_t$~\cite{fortet1940ipf1,kullback1968ipf2,cuturi2013sinkhorn,shi2023dsbm}.
However, since we focus on \textbf{\emph{data-to-energy sampling}} framework, we \textbf{\emph{only}} need the forward optimal control $u_t^\star$, which can be learned by
alternating Adjoint Matching (AM)~\eqref{eq/sec3/asbs_am} and Corrector Matching (CM)~\eqref{eq/sec3/asbs_cm}~\cite{liu2025asbs}:
\vspace{-0.5em}
\begin{equation}
\label{eq/sec3/asbs_am}
\scalebox{0.93}{$
\min\limits_{\theta}\;
\mathbb{E}_{\,p^{\mathrm{base}}_{t\mid 0,1},\,p^{{\bar{ u}^{\theta}}}_{0,1}}
\left[
\left\|\,{u}_{t}^{\theta}(X_t) + \big(\sigma_t\nabla E + {\bar{v}_{1}^{\phi}} \big)(X_1)\right\|^2
\right],
$}
\end{equation}
\vspace{-0.5em}
\begin{equation}
\label{eq/sec3/asbs_cm}
\scalebox{0.9}{$
\min\limits_{\phi}\;
\mathbb{E}_{\,p^{{\bar{u}^{\theta}}}_{0,1}}
\left[
\left\|\,v_{1}^{\phi}(X_1) - \sigma_{1}\nabla_{x_1}\log p^{\text{base}}(X_1\mid X_0)\right\|^2
\right],
$}
\end{equation}
where $\bar u = \operatorname{stopgrad}(u)$ and $\bar v = \operatorname{stopgrad}(v)$, and we define the energy $E$ by $p_{\text{prior}} \propto \exp{(-E(x))}$.
Note that CM is same as bridge matching only at $t=1$. See Appendix \ref{apdx/terminal_cost} for more details.

By training forward dynamic via SOC framework without any reliance on backward dynamic, we can optimize the forward control to approximate the optimal coupling with \textbf{\emph{high stability}}.
This also allows us to simulate endpoint pairs via only the forward dynamic, which has various advantages.
Since forward control learns dynamic from complicated data space to simple prior, it is much \textbf{\emph{easier to learn}} compared to backward dynamic, leading to \textbf{\emph{lighter model capacity}} (Appendix \ref{apdx/exp_setting}) with \textbf{\emph{fast convergence}} (\cref{sec/exp/cost}).
Most importantly, this high stability of our optimization scheme allows us to adopt \textbf{\emph{non-memoryless}} base SDE which requires more complicated training compared to the memoryless one.

Upon convergence, our forward dynamic~\eqref{eq/sec2/controlled_sde_fwd} approximates optimal couplings $p^\star(X_0,X_1)$.
It is important to note that our $p^\star(X_0,X_1)$ is \textbf{\emph{optimal coupling}}, since our framework minimizes the transport cost~\eqref{eq/sec2/sb_soc_eq} while considering the non-trivial correlation between $X_0$ and $X_1$ via the non-memoryless condition.
Intuitively, since non-memoryless base SDE injects smaller noise compared to the forward process in standard diffusion, \emph{i.e.}, $\sigma_{t} \ll \sigma_{t}^\text{DM}$, transport cost minimization leads to significantly straighter trajectories, allowing considerably \textbf{\emph{low NFEs}} (\cref{sec/exp/ablation}) for simulating endpoint pair.
This property is fundamentally unattainable under the standard memoryless forward dynamic in diffusion models, relying on independent endpoint pairings.

{\setlength{\textfloatsep}{1pt}
 \setlength{\intextsep}{1pt}
\begin{algorithm}[t]
\small
\caption{ASBM optimization with VP base SDE}
\label{alg:vp_asbm}
\begin{algorithmic}[1]
\REQUIRE $X_0\sim p_{\mathrm{data}}$; $p_{\mathrm{prior}}(x)\propto e^{-E(x)}$; forward control ${u}^{\theta}_t(\cdot)$, backward control $v^{\phi}_t(\cdot)$;
steps $N_1,N_2$.

\STATE \textbf{Base SDE:} $f_t(x):=-\tfrac12\beta_t x,\ \sigma_t:=\sqrt{\beta_t},$\\
with $\beta_t:=(1-t)\beta_{\max}+t\beta_{\min}$.
\STATE \textbf{Reciprocal sampler:} Given\\
$\kappa_t:=\exp\!\big(-\tfrac12\int_t^1\beta_\tau\,\rd \tau\big)$, 
$\bar\kappa_t:=\exp\!\big(-\tfrac12\int_0^t\beta_\tau\,\rd \tau\big)$,\\
$X_t \sim p^{\mathrm{base}}_{t|0,1}(\cdot\mid X_0,X_1)
=\mathcal{N}(\mu_t,\Sigma_t I)$ where,\\
$\mu_t =\dfrac{\bar\kappa_t(1-\kappa_t^2)}{1-\bar\kappa_1^2}X_0
+\dfrac{\kappa_t(1-\bar\kappa_t^2)}{1-\bar\kappa_1^2}X_1,$

$\Sigma_t=\dfrac{(1-\kappa_t^2)(1-\bar\kappa_t^2)}{1-\bar\kappa_1^2}.$

\vspace{0.3em}
\STATE \textbf{Stage 1:} optimize ${u}^{\theta}_t$ via SOC.
\FOR{$i=1$ to $N_1$}
  \STATE Update $\theta$ via \cref{eq/sec3/asbs_am}.
  \STATE Update $\phi$ at $t{=}1$ via \cref{eq/sec3/asbs_cm}.
\ENDFOR

\vspace{0.3em}
\STATE \textbf{Stage 2:} optimize $v^{\phi}_t$ via BM with reciprocal sampler, \\
under $p_{0,1}^{{\bar u}^\theta}(X_0,X_1) \approx p^\star_{0,1}(X_0,X_1).$
\FOR{$i=1$ to $N_2$}
  \STATE Update $\phi$ via \cref{eq/sec3/bm}.
\ENDFOR
\end{algorithmic}
\end{algorithm}
}

\textbf{Backward Dynamic Optimization.}
Given the optimal joint $p^{{u}^\theta}(X_0,X_1)$ induced by the optimized forward dynamic~\eqref{eq/sec2/controlled_sde_fwd}, we supervise our backward dynamic by bridge matching in \cref{eq/sec3/bm_bwd} under these couplings:
\begin{equation}
\label{eq/sec3/bm}
\scalebox{0.88}{$
\min\limits_{\phi}\;
\mathbb{E}_{\,p^{\text{base}}_{t\mid 0,1},\,p^{{\bar{u}^{\theta}}}_{0,1}}
\Bigl[
\bigl\|
v_{t}^{\phi}(X_t)
-\sigma_{t}\nabla_{x_t}\log p^{\text{base}}(X_t\mid X_0)
\bigr\|^2
\Bigr].
$}
\end{equation}
With direct supervision under an optimal coupling, backward training converges much faster than memoryless diffusion baselines.
Moreover, the two-stage design enables principled use of the \textbf{\emph{reciprocal process}}, which is exact only when the optimal endpoint coupling $p^\star(X_0,X_1)$ is available.
In contrast, prior methods typically lack in optimal coupling and thus rely on alternating forward-backward optimization with reciprocal process of imperfect endpoints.
Our full algorithm is described in~\cref{alg:vp_asbm}. 

\textbf{Advantages of \OURS.}
Previous methods~\cite{de2021dsb3, chen2021sbfbsde, shi2023dsbm, liu2022deep, liu2023generalized, chen2023deep} address the SB problem by alternating the optimization of forward and backward dynamics, repeatedly generating trajectories using the current forward (resp. backward) model to provide supervision for updating the backward (resp. forward) dynamic.
Despite extensive prior work, scaling such alternating schemes to high-dimensional settings remains challenging.
These approaches implicitly assume that trajectories generated by one direction provide sufficiently informative supervision for optimizing the opposite direction.
However, in generative modeling, learning the backward dynamic, from a simple prior to a complex data distribution, is particularly difficult.
As a result, inaccurate trajectories learned at training can destabilize the optimization of the counterpart dynamic and lead to error accumulation, ultimately yielding mismatched forward and backward processes and non-optimal couplings (see \cref{sec/exp/analysis}).
To mitigate this, prior methods either resort to memoryless base dynamics, which diminishes the benefits of SBs, or rely on pretraining stages using independent couplings, which do not fully align with the theoretical formulation and may lead to practical instability.


In summary, our two-stage optimization completely resolves these issues.
By isolating the forward optimization as data-to-energy sampling problem, we obtain optimal coupling with (i) high stability, (ii) low NFEs, and (iii) light model capacity under (iv) non-memoryless condition.
Our backward optimization also becomes considerably simpler and more stable through the bridge matching loss with exact reciprocal process under optimal coupling.




\subsection{Distillation to One-Step Generator}
\label{sec/method/distillation}
To further verify the effectiveness of the optimal trajectory of our method, we introduce a \emph{data-free} distillation method for one-step generator within the SB framework.
This part demonstrates the inherent strengths of our approach: since the learned generative paths are significantly more organized than those in standard diffusion models, ASBM provides a more suitable and efficient foundation for distillation.

\textbf{Distillation in Control Space.}
We aim to distill our learned backward control $v_t^{\phi}$ to a one-step generator $G^\psi:\mathbb{R}^d \rightarrow \mathbb{R}^d$.
Upon successful training, we are given the learned backward controlled SDE \eqref{eq/sec2/controlled_sde_bwd},
which approximately reaches the data marginal $p_{\mathrm{data}}$ at $t=0$. We denote the path measure induced from the backward dynamic as $p^{\phi}$.
Let
\begin{equation}
\label{eq/sec3/G_law}
p^\psi_{0,1}
:= \mathrm{Law}\big( (G^\psi(X_1), X_1 ) \big)
\end{equation}
denote the joint distribution induced by \(X_1 \sim p_{\mathrm{prior}}\) and \(z \sim \mathcal N(0,I)\),
where \(G^\psi\) is a one-step generator mapping the prior sample \(X_1\) to a data sample \(X_0 = G^\psi(X_1)\).

We further extend the notation \(p^\psi_{0,1}\) to a full path measure
\(p^\psi\) on \([0,1]\times\mathbb R^d\),
which represents an (implicit) stochastic process whose endpoint coupling is given by \(p^\psi_{0,1}\).
Our goal is to match this path measure with a target path measure \(p^\phi\),
by minimizing the path-space KL divergence \(D_{\mathrm{KL}}(p^\psi \,\|\, p^\phi)\).
In particular, achieving \(p^\psi \approx p^\phi\) ensures that the induced marginal satisfies
\(p^\psi_0 = p_{\text{data}}\), \emph{i.e.}, \(G^\psi\) successfully transports the prior distribution to the data distribution.

Since the path measure \(p^\psi\) induced by \(G^\psi\) is not explicitly tractable,
we introduce a control-based parametrization to approximate it.
Specifically, we consider a controlled diffusion whose path measure \(p^\xi\) is induced by
\begin{equation}
\label{eq/sec3/G_sde}
\scalebox{0.95}{$
\rd X_t \!=\! \big[f_t(X_t) - \sigma_t v_{t}^{\xi}(X_t)\big]\,\rd t + \sigma_t\rd W_t, \ \ X_1 \!\sim\! p_{\text{prior}},
$}
\end{equation}
where $v^\xi: [0,1]\times\mathbb{R}^d \rightarrow \mathbb{R}^d$ is a parameterized control.
The role of \(v^\xi\) is to parameterize \(p^\xi\) that approximates the implicit path measure \(p^\psi\). As a result, we need to alternately optimize $G^\psi$ and $v^\xi$ to reflect continuously changing $p_{0,1}^\psi$.

Given the current one-step generator $G^\psi$, we aim to learn control \(v^\xi\), such that the corresponding path measure $p^\xi$ correctly  estimates $p^\psi$.
Given endpoint pairs \((X_0,X_1)\sim p^\psi_{0,1}\), we utilize bridge matching to update $v^\xi$:
\begin{equation}
\label{sec3/distillation_bm}
\scalebox{0.915}{$
\min\limits_{\xi}
\mathbb E_{\,p^{\text{base}}_{t|0,1},\, p^\psi_{0,1}}
\Big[
\big\|
v^\xi_t(X_t)
-
\sigma_t \nabla \log p^{\text{base}}_{t|0}(X_t \mid X_0)
\big\|^2
\Big].
$}
\end{equation}
This objective encourages \(v^\xi\) to approximate the score of the base bridge conditioned on \(X_0\),
thereby matching the induced path measure \(p^\xi\) to \(p^\psi\).

Finally, we update the generator \(G^\psi\) by minimizing the discrepancy between the learned control \(v^\xi\)
and the target control \(v^\phi\).
Concretely, we solve
\begin{equation}
\label{eq/sec3/control_distillation}
\min_{\psi} \; \mathbb{E}_{p^{\mathrm{base}}_{t|0,1},\,p^\psi_{0,1}} \bigl[ \bigl\| \bar{v}_t^{\xi}(X_t)-\bar{v}_{t}^{\phi}(X_t) \bigr\|^2 \bigr],
\end{equation}
which can be derived from a KL minimization that aligns the $p^\xi$ to $p^\psi$ by Girsanov's theorem~\cite{sarkka2019appliedSDE} (See Appendix \ref{apdx/distillation} for detailed derivation).
With memoryless forward base process, our SB distillation recovers the score distillation~\cite{poole2022sds}, following \cref{eq/sec3/sb_recip_to_sm}.

\begin{figure}[t]
    \centering
    \includegraphics[width=\linewidth]{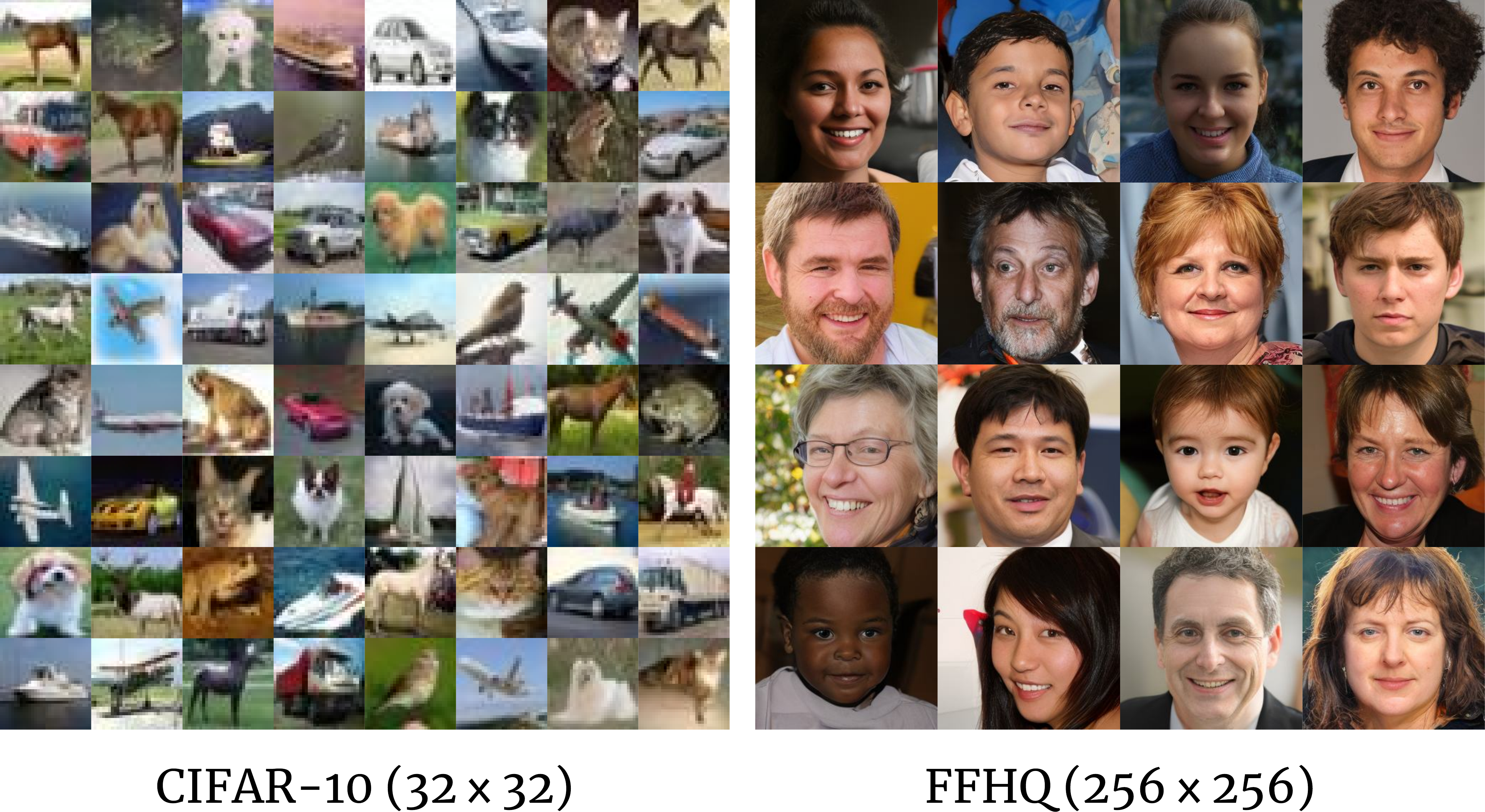}
    \caption{Generated samples from ASBM on pixel space (CIFAR-10) and on latent space (FFHQ).}
    \label{fig:ours_examples}
    \vspace{-0.5cm}
\end{figure}
\textbf{Initialization of $G^\psi$.}
Following the score distillation~\cite{yin2024dmd1}, we initialize $G^\psi$ from pretrained backward control $v_t^\phi$ via Tweedie's formula~\cite{efron2011tweedie} at $t=1$:
\begin{equation}
\label{eq/distillation_init}
    G^\psi(x)=\frac{x+(1-\bar\kappa_1^2)v_1^\phi(x)/\sigma_1}{\bar\kappa_1},
\end{equation}
with notation as in \cref{alg:vp_asbm}.
For diffusion score models, this one-step estimate often produces noisy images due to its highly stochastic trajectory, typically mitigated by timestep shifting~\cite{yin2023opendmd}, which introduces bias.
In contrast, ASBM’s straighter path makes this initialization reliable without timestep shifting.
See Appendix \ref{apdx/distillation} for illustration.



\textbf{Advantages of ASBM Distillation.}
ASBM not only yields a straighter path, but also constructs \emph{highly organized trajectories} connecting \emph{adjacent} pairs $(X_{0}, X_{1})$.
This is a special characteristic of non-memoryless SB originated from its \emph{non-memoryless optimal coupling} (Proposition \ref{prop/sec3/optimal_memoryless}).
Specifically, covering the entire prior space with non-memoryless trajectories leads to
a more efficient connection from the specific mode in data space to the specific \emph{nearby} area in $p_{\text{prior}}$.
In other words, both $p_1^{{u^\theta}}(X_1\mid X_0)$ and $p_0^{{v^\phi}}(X_0\mid X_1)$ have lower variance than the case of memoryless process (See \cref{sec/exp/analysis} for demonstrations).
Together with straightness, it has strong benefits to distillation in the aspect of \emph{learning difficulty} and \emph{mode coverage} compared to the memoryless trajectory of diffusion models.

\newcommand{\cmark}{\textcolor{green!60!black}{\ding{51}}}
\newcommand{\xmark}{\textcolor{red!70!black}{\ding{55}}}
\newcommand{\dashmark}{\textcolor{black}{--}}

\begin{table}[t]
\centering
\footnotesize
\setlength{\tabcolsep}{4pt}
\begin{tabularx}{\columnwidth}{X c c c r}
\toprule
Method &
\thead{OC} &
\thead{No PT} &
\thead{FB} &
FID $\downarrow$ \\
\midrule
\emph{Refinement method} & & & & \\
DOT~\cite{tanaka2019dot} & \cmark & \xmark & \dashmark & 15.78 \\
DGFLOW~\cite{ansari2020dgflow} & \cmark & \xmark & \dashmark & 9.63 \\
\midrule
\emph{Memoryless SDE} & & & & \\
Score SDE~\cite{song2020sm} & \xmark & \cmark & \cmark & 4.61 \\
SB-FBSDE~\cite{chen2021sbfbsde} & \xmark & \xmark & \xmark & 5.26 \\
VSDM~\cite{deng2024vsdm} & \xmark & \xmark & \xmark & 4.24 \\
\midrule
\emph{Non-Memoryless SDE} & & & & \\
DSBM~\cite{shi2023dsbm} & \cmark & \xmark & \xmark & 9.68 \\
\textbf{\OURS} \textbf{(Ours)} & \cmark & \cmark & \cmark & \textbf{3.16} \\
\bottomrule
\end{tabularx}
\caption{\textbf{FID evaluation on CIFAR-10.} ASBM shows superior generative performance by achieving all three key advantages: optimal coupling (OC), no reliance on pre-training (No PT), and consistent forward–backward dynamics (FB).} 
\label{tab:ot_fid}
\vspace{-0.5cm}
\end{table}

\section{Experiments}
\label{sec:exp}


We validate the generative performance and efficiency of ASBM in \cref{sec/exp/fid}, and analyze its optimal trajectory in \cref{sec/exp/analysis}.
Then, we verify effective distillation of ASBM in \cref{sec/exp/distillation}.
Finally, we explore the main hyperparameters through the ablation study in \cref{sec/exp/ablation} and compare the training cost to score matching in \cref{sec/exp/cost}.

\textbf{Dataset \& Baselines.} 
We compare our ASBM against Score SDE~\cite{song2020sm} as a representative instantiation of memoryless diffusion and prior SB-based frameworks on the pixel space of CIFAR-10~\cite{krizhevsky2009cifar10} with FID~\cite{heusel2017fid} metric. 
Then, we further verify ASBM in the LDM~\cite{rombach2022ldm} framework using the Stable Diffusion 3 autoencoder~\cite{esser2024sd3} on FFHQ~\cite{karras2019ffhq}. For distillation task, we report recall and precision metrics~\cite{kynkaanniemi2019recall} to evaluate mode coverage, together with FID.

\begin{figure}[t]
    \centering
    \includegraphics[width=\linewidth]{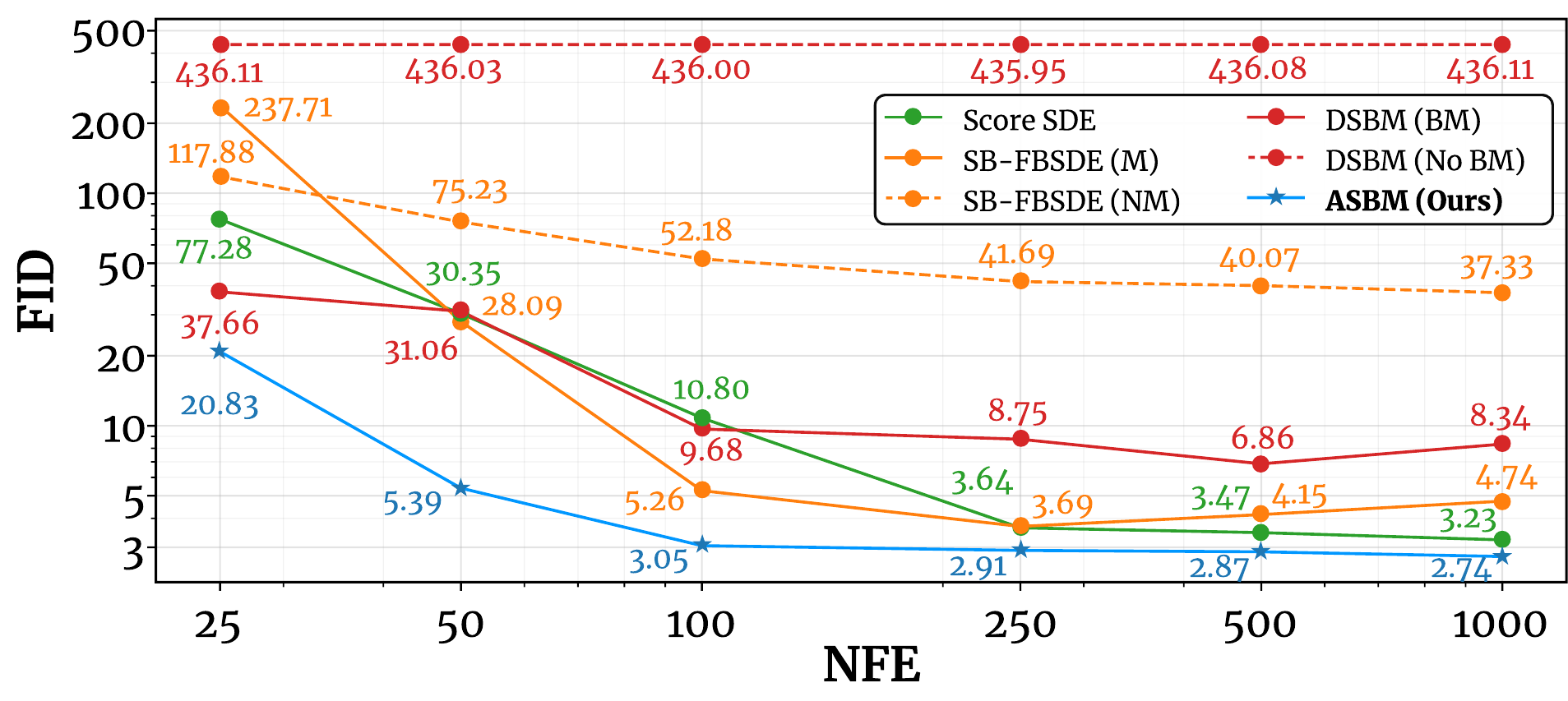}
    \caption{\textbf{FID comparison along the NFE.} We use M and NM to denote the memoryless and non-memoryless condition, respectively. BM denotes empirical bridge-matching pretraining.}
    \label{fig:nfe_cifar10}
    \vspace{-.3cm}
\end{figure}
\begin{table}[t]
\centering
\footnotesize
\begin{tabular}{lcccccc}
\toprule
NFE & 25 & 50 & 100 & 250 & 500 & 1000 \\
\midrule
Score SDE & 52.08 & 19.02 & 9.84 & 7.79 & 6.88 & 6.63 \\
\textbf{ASBM (Ours)} & \textbf{8.85} & \textbf{7.64} & \textbf{6.85} & \textbf{6.47} & \textbf{6.38} & \textbf{6.27} \\
\bottomrule
\end{tabular}
\caption{FID evaluation on latent space of FFHQ.}
\label{tab:ffhq}
\vspace{-0.5cm}
\end{table}


\textbf{Experimental Setup.}
We adopt the Variance Preserving (VP) path~\cite{song2020sm} with non-memoryless setting as the base SDE for ASBM.
For coupling generation at training, we use the Euler-Maruyama solver~\cite{kloeden2011em} with 20 NFE for CIFAR-10 and 50 NFE for FFHQ.
More details can be found in Appendix \ref{apdx/exp_setting}.

\subsection{Image Generation Performance}
\label{sec/exp/fid}
\textbf{Generation on Pixel Space.} \cref{tab:ot_fid} reports FID on CIFAR-10 at 100 NFE using each method’s reported solver. 
ASBM significantly outperforms all baselines by achieving the efficient and straight trajectory under optimal coupling.
As discussed in \cref{sec/method/motivation}, memoryless base SDEs (Score SDE, SB-FBSDE and VSDM) induce independent coupling at optimal state and thus fail to yield an informative optimal coupling.
While DSBM relaxes this via a non-memoryless condition, it remains difficult to scale to high-dimensional data even with an additional pretraining stage.

We ablate SB-FBSDE with non-memoryless process and DSBM without pretraining to evidently show these limitations in \cref{fig:nfe_cifar10}. (i) SB-FBSDE performs well only in the memoryless regime but substantially degrades under the non-memoryless setting, suggesting it cannot yield optimal coupling in high dimensions. (ii) DSBM works well only with empirical bridge matching pretraining, incorrectly assuming independent endpoint coupling as optimal one with the non-memoryless settings. 
Even with pretraining, DSBM shows unstable and inferior FID scores across various NFEs.
In contrast, our ASBM achieves significantly lower FID with low NFE (20-100), implying its efficient trajectory.

\textbf{Generation in LDM Framework.}
We verify generalizability of ASBM on latent space of FFHQ in \cref{tab:ffhq}.
Consistent with the pixel-space results, ASBM achieves substantially lower FID at small NFE and maintains superior FID across the entire NFE range compared to Score SDE.

\begin{figure}[t]
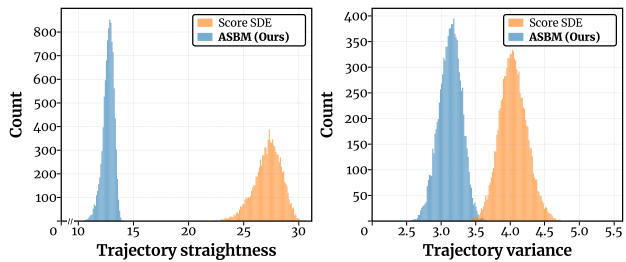

    \centering
    \includegraphics[width=0.49\linewidth]{figure/traj_straightness_score_sde.pdf}
    \includegraphics[width=0.49\linewidth]{figure/traj_variance_scoresde.pdf}
    \caption{\textbf{Trajectory efficiency.} 
    \emph{Left}: Ours shows significantly straighter trajectory, leading to low NFE at generation. \emph{Right}: Ours has lower trajectory variance, implying its better organized path.}
    \label{fig:traj_efficiency}
    \vspace{-.15cm}
\end{figure}
\begin{table}[t]
\centering
\footnotesize
\setlength{\tabcolsep}{3pt}
\begin{tabularx}{\linewidth}{l *{4}{>{\centering\arraybackslash}X}}
\toprule
Method & Score SDE & SB-FBSDE & DSBM & \textbf{ASBM} \\
\midrule
FID    & 6.72           & 285.77   & 39.84& \textbf{3.74} \\
\bottomrule
\end{tabularx}
\caption{FID at 25 steps with Heun solver on CIFAR-10.}
\vspace{-0.5cm}
\label{tab:heun_cifar10}
\end{table}

\subsection{Optimal Trajectory of ASBM}
\label{sec/exp/analysis}
We further analyze strength of our trajectory through path straightness/variance, and forward-backward consistency.


\textbf{Trajectory Straightness.} We measure the path straightness via the trajectory functional
\begin{equation}
\label{eq:straightness}
S(X_{0:1})
:= \frac{\sum_{i=0}^{T-1}\|X_{(i+1)/T}-X_{i/T}\|_2^2}{\|X_1-X_0\|_2^2},
\end{equation}
from a $T$-step trajectory $\{X_{i/T}\}_{i=0}^{T}$. 
We use 10K trajectories generated with $T=100$ steps.
As shown in \cref{fig:traj_efficiency} (left), ASBM yields substantially smaller $S$ than Score SDE, confirming that our non-memoryless coupling produces straighter generation trajectories.


\textbf{Trajectory Variance.} 
As discussed in \cref{sec/method/distillation}, non-memoryless SB is expected to have \emph{localized coupling}, which can be demonstrated by low trajectory variance where most mass of $p_0^{v^\phi}(X_0|X_1=x_1)$ concentrates more on its centroid.
To verify this property, for each of 10K initial noises, we generate 10 images and compute the average $\ell_2$ distance of these images to their centroid.
As shown in \cref{fig:traj_efficiency} (right), ASBM exhibits a notably stronger concentration, indicating its strongly \emph{organized} trajectory.


We verify this property by inversion test for further intuition.
Starting from a fixed image $X_0$, we sample $X_1\sim p^{{u}^\theta}(\cdot\mid X_0)$ using our forward SDE in \cref{eq/sec2/controlled_sde_fwd}, then reconstruct $\widehat{X}_0$ by running the backward SDE in \cref{eq/sec2/controlled_sde_bwd} from $X_1$.
As shown in~\cref{fig:fwd_bwd}, ASBM recovers $\widehat{X}_0$ that remains highly similar to original $X_0$, whereas Score SDE produces a random reconstruction, consistent with its memoryless dynamic.
These results clearly indicate that ASBM's trajectory is not only straighter but also highly organized under non-memoryless process, which in turn simplifies distillation.


\textbf{Forward-Backward Consistency.} 
Finally, we evaluate the consistency of forward-backward dynamics.
For an exact SB solution, the optimal bridge is unique, implying compatible forward-backward dynamics that share the same time marginals.
We quantitatively evaluate this property through generation via the Heun’s method~\cite{ascher1998heun,karras2022edm}, based on probability flow ODE~\cite{song2020sm} requiring precisely coupled forward-backward dynamics.
\cref{tab:heun_cifar10} verifies the limitation of prior works by showing notable degradation of these baselines with Heun's method, whereas ASBM remains robust with a few steps, highlighting the benefit of our two-stage optimization. 
This is because, in practice, alternating optimization in SB-FBSDE and DSBM can be unstable in high dimensions and each direction provides inconsistent trajectory supervision to its counterpart optimization, eventually producing mismatched forward-backward systems. 

\begin{figure}[t]
    \centering
    \includegraphics[width=\linewidth]{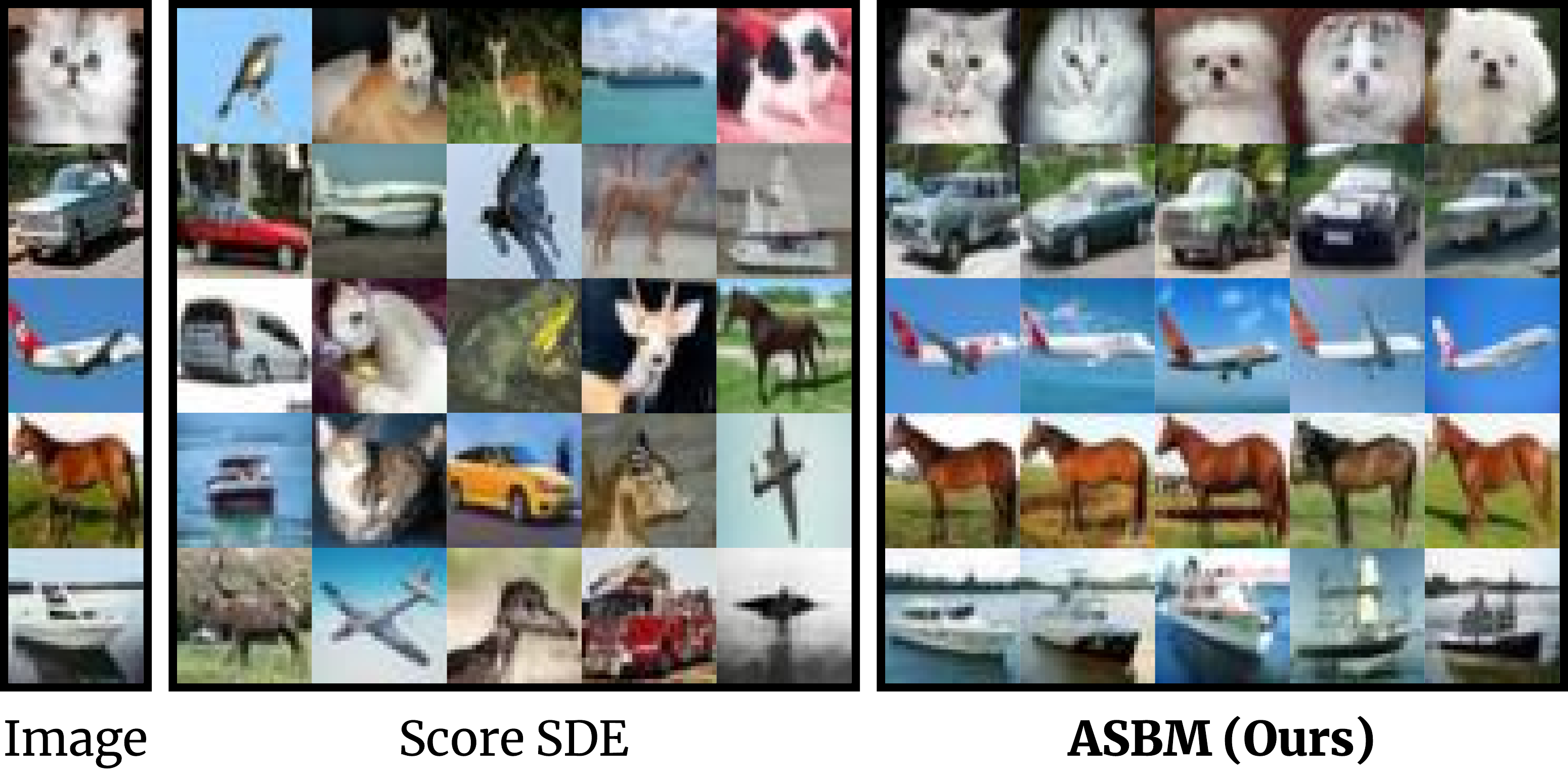}
    \vspace{-.5cm}
    \caption{\textbf{Localized prior-data coupling.} 
    ASBM trajectories preserve information: reversing from a noised image produces samples similar to the original. In contrast, memoryless dynamics yield completely random samples due to highly noisy trajectories.}
    \label{fig:fwd_bwd}
    \vspace{-.3cm}
\end{figure}

\subsection{Distillation to One-Step Generator}
\label{sec/exp/distillation}

We showcase the ASBM's efficient trajectory via distillation to one-step generator (\cref{sec/method/distillation}) on CIFAR-10, comparing it to
score-based distillation baselines: Score Distillation Sampling (SDS)~\cite{poole2022sds} and Diffusion Matching Distillation (DMD)~\cite{yin2024dmd1}, which augments SDS with a regression loss to mitigate mode collapse.

\begin{table}[t]
\centering
\footnotesize
\begin{tabular}{lccc}
\toprule
Method & FID $\downarrow$ & Recall $\uparrow$ & Precision $\uparrow$ \\
\midrule
SDS~\cite{poole2022sds} & 9.36 & 0.504 & \underline{0.706} \\
DMD~\cite{yin2024dmd1} & \underline{8.25} & \underline{0.513} & \textbf{0.715} \\
\midrule
\textbf{Ours} & \textbf{6.68} & \textbf{0.542} & 0.702 \\
\bottomrule
\end{tabular}
\caption{Result of distillation to one-step generator on CIFAR-10.}
\vspace{-.7cm}
\label{tab:distillation}
\end{table}

As shown in \cref{tab:distillation}, ASBM outperforms prior score-distillation models. Notably, the improved recall indicates substantially \emph{reduced mode collapse}, avoiding the need of costly regression loss employed in DMD.
This regression loss requires a large number of noise-image pairs generated from the original score model (\emph{e.g.}, 500K pairs).
We attribute the superiority of ASBM's distillation to the localized prior–data coupling, \emph{i.e.}, efficiently organized trajectory induced by our optimal coupling, which provides more informative guidance and better covers diverse data modes.

\subsection{Ablation Study}
\label{sec/exp/ablation}
\begin{figure}[t]
    \centering
    \includegraphics[width=\linewidth]{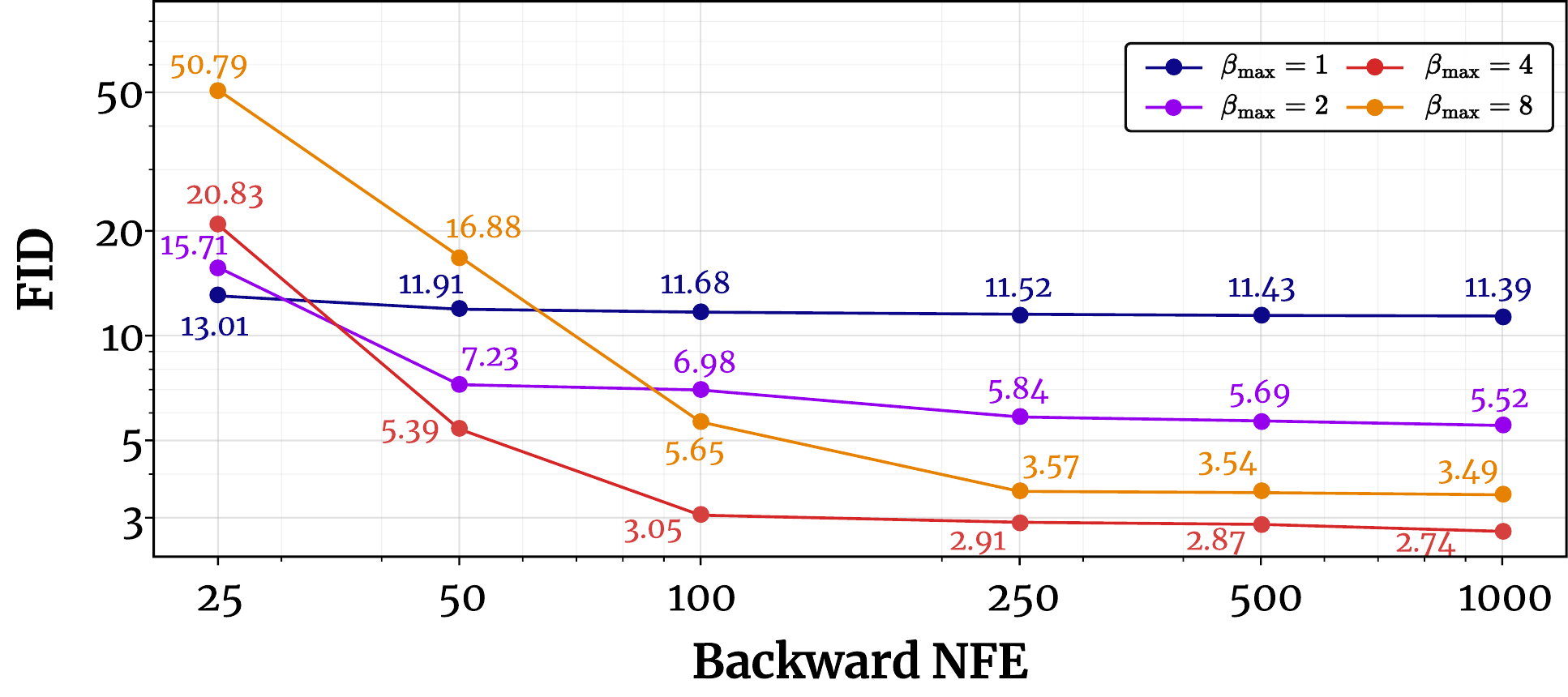}
    \caption{Ablation on different degree of memorylessness.}
    \label{fig:nfe_betamax}
    \vspace{-.5cm}
\end{figure}
We investigate the degree of memorylessness and the effects of forward NFE on CIFAR-10.

\textbf{Memorylessness.}
For the VP base process (\cref{alg:vp_asbm}), a larger $\beta_{\text{max}}$ injects more noise, and pushes the base SDE toward the memoryless process; for example, $\beta_{\text{max}}=20$ recovers the standard memoryless diffusion setting.
We ablate memorylessness by varying $\beta_{\text{max}}$.
As shown in~\cref{fig:nfe_betamax}, $\beta_{\text{max}}$ controls the trade-off between training efficiency and prior coverage.
When $\beta_{\text{max}}$ is too small (\emph{e.g.}, $\beta_{\text{max}}=1$), the path becomes straighter and training is efficient, but the terminal distribution $p_1^{{u}^\theta}$ under-covers $p_{\text{prior}}$ even after successful optimization, because $p_{1}^{\text{base}}$ is too different from $p_{\text{prior}}$.
This eventually leaves low-density holes in $p_1^{{u}^\theta}$, resulting in worse FID with fine-grained NFE.
Conversely, a larger $\beta_{\text{max}}$ (\emph{e.g.}, $\beta_{\text{max}}=8$) improves coverage of the prior space, but the increased noise injection leads to more curved paths, making accurate simulation difficult with limited NFEs.
We therefore use $\beta_{\text{max}}=4$ by default as a favorable trade-off between informative coupling and robust prior matching.

\textbf{Forward NFE.}
Forward NFE largely determines the training cost. 
Prior SB methods~\cite{chen2021sbfbsde,shi2023dsbm} typically require 100–200 NFEs to alternately simulate forward-backward dynamics, which dominates runtime, while ASBM only uses lightweight forward simulation, achieving strong performance with just 20 NFEs.
As shown in \cref{tab:ablation_nfe}, ASBM remains strong even with 10 NFEs, and the negligible gap between 20 and 50 NFEs supports our hypothesis that the data-to-energy forward optimization is easier, enabling accurate couplings with fewer NFEs.

\subsection{Training Efficiency}
\label{sec/exp/cost}
Our backward optimization converges significantly faster, \emph{i.e.}, 600 epochs for ASBM and 3300 epochs for Score SDE, due to its supervision under \emph{optimal coupling}.
As discussed in \cref{sec/method/main}, our forward dynamic is also lightweight due to its data-to-energy direction, converging with the cost same as 150 backward epochs. Considering the cost of coupling construction, the total training cost of ASBM is equivalent to 2100 Score SDE epochs, yielding a $0.64\times$ reduced computation relative to Score SDE.
\begin{table}[t]
\centering
\footnotesize
\begin{tabular*}{\linewidth}{l@{\extracolsep{\fill}}cccccc}
\toprule
{Forward} & \multicolumn{6}{c}{{Backward NFE}} \\
\cmidrule(l){2-7} 
{NFE} & 25 & 50 & 100 & 250 & 500 & 1000 \\
\midrule
10 & 23.62 & 8.03 & 3.58 & 3.40 & 3.28 & 3.05 \\
20 & \underline{20.83} & \textbf{5.39} & \textbf{3.05} & \textbf{2.91} & \textbf{2.87} & \textbf{2.74} \\
50 & \textbf{20.73} & \underline{5.87} & \underline{3.16} & \underline{3.01} & \underline{2.94} & \underline{2.77} \\
\bottomrule
\end{tabular*}
\caption{Ablation on different forward NFE.}
\vspace{-.7cm}
\label{tab:ablation_nfe}
\end{table}
\section{Related Work}
\label{sec:related}

\textbf{Generative Models.}
Dynamic-based generative models have become a major paradigm in generative tasks~\cite{ho2020ddpm,song2020ddim,song2020sm,lipman2022fm,liu2022fm,albergo2023stochastic,de2021dsb3}.
In order to improve the efficiency of these models, research efforts devoted to reducing the NFE of these models, \emph{e.g.}, different dynamic solvers~\cite{lu2022dpm,lu2025dpm,zhang2022fast,karras2022edm}, or rectifying the trajectories~\cite{liu2022fm,liu2023instaflow}. However, these works are based on post-refinement on the top of the noisy diffusion path. To obtain the optimal trajectory, study on dynamic optimal transport has been urged.

\textbf{Dynamic Optimal Transport (OT).}
OT-based generative models range from Wasserstein objectives~\cite{arjovsky2017wasserstein} to entropy-regularized OT, \textit{i.e.}, Schr\"odinger Bridges (SB)~\cite{leonard2013sb2,chen2016sb4,chen2021sb1,caluya2021sb_opt}.
Scalable SB solvers typically alternate forward-backward updates, \emph{e.g.}, Iterative Proportional Fitting/Sinkhorn-style methods~\cite{fortet1940ipf1,kullback1968ipf2,ruschendorf1995convergence,de2021dsb3,chen2021sbfbsde} and recent matching-based variants improving practicality~\cite{shi2023dsbm,chen2023deep,peluchetti2023diffusion}.
SB has been applied to image generation~\cite{de2021dsb3,chen2021sbfbsde,deng2024vsdm} and image-to-image translation~\cite{liu2023i2sb,theodoropoulos2024feedback,gushchin2024light}.
Our approach is fundamentally different, as we first formulate generative modeling as finding optimal coupling and then supervise the generation path under this coupling to learn efficient trajectory.


\section{Conclusion}

We present ASBM, a two-stage SB framework that learns \emph{informative} optimal couplings. The key idea is to decouple forward and backward optimization: we first learn the forward dynamic as a controlled sampling problem under a \emph{non-memoryless} base process, then train the backward dynamic using the optimal couplings induced by the learned forward transport. This avoids unstable bidirectional alternating training, yielding an efficient trajectory. 
While our extensive experiments thoroughly validate ASBM on standard benchmarks (pixel space, LDM framework, distillation), we leave large-scale high-resolution and conditional generation evaluations to future work, expecting the natural transfer.

\newpage
\section*{Impact Statement}
Our work focuses on developing computational methods for image generation with optimal trajectory. The techniques are purely theoretical and computational, relying exclusively on public image dataset. No personal data, or sensitive contents are involved. We therefore identify no ethical concerns arising from this research. For this framework, there are many possible societal impacts, none of which need specific highlighting. 
\bibliography{main}
\bibliographystyle{icml2026}

\clearpage

\appendix
\pagenumbering{roman}
\renewcommand\thetable{\Roman{table}}
\renewcommand\thefigure{\Roman{figure}}
\setcounter{table}{0}
\setcounter{figure}{0}

\setcounter{page}{1}

\onecolumn
\section*{Appendix}

\section{Proofs}
\textbf{Stochastic Optimal Control (SOC).} In our framework (in general, SOC transports from samplable distribution to Boltzmann distribution), SOC-based sampling problem finds a control that transports $p_{\text{data}}$ to a prescribed prior $p_{\text{prior}}\sim \exp(-E(x))$ under cost minimization:
\begin{equation}
\label{apdx/eq/soc_obj}
\min_{u}\ \mathbb{E}_{X\sim p^{u}}\!\left[
\int_{0}^{1} \frac{1}{2}\,\|{u}_t(X_t)\|^{2}\,\rd t + g(X_1)
\right]
\end{equation}
\begin{equation}
\text{s.t.} \,\,\, \rd X_t = \big[f_t(X_t) + \sigma_t {u}_{t}(X_t)\big]\,\rd t + \sigma_t\,\rd W_t, \,\,\
X_0 \sim p_{\text{data}}, 
\label{apdx/eq/soc_sde}
\end{equation}
where $g(x):\mathbb{R}^d \rightarrow \mathbb{R}$ is a terminal cost and SDE is constrained only by data distribution $X_0 \sim p_{\text{data}}$. Note that the terminal constraint on $p_{\text{prior}}$ is implicitly enforced by the terminal cost $g(x)$.

Under the SOC optimality, optimal control can be analytically derived through Hamilton-Jacobi-Bellman (HJB) equation~\cite{bellman1954theory}.

\begin{theorem}\label{apdx/theorem/soc_optimality} (SOC optimality) Under the SOC optimality, the optimal control is $\overrightarrow{u}^\star_{t}(x) = -\sigma_t\nabla V_t(x)$, where $V_t(x): [0,1]\,\times\, \mathbb{R}^d \rightarrow \mathbb{R}^d$ is a value function,
\begin{equation}
\label{apdx/eq/value}
V_t(x)
= -\log \mathbb{E}_{X\sim p^{\textnormal{base}}}\!\left[\exp(-g(X_1)) \mid X_t=x\right].
\end{equation}
The optimal joint distribution can also be characterized as
\begin{equation}
\label{apdx/eq/optimal_joint}
p^\star (X_0,X_1) = p^{\textnormal{base}}(X_0,X_1)\,\textnormal{exp}(-g(X_1)+V_0(X_0)).
\end{equation}
\end{theorem}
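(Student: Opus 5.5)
The plan is to prove the statement by the standard variational (Donsker--Varadhan / Gibbs) argument on path space, and then read off the control from the HJB equation via a Hopf--Cole transform. The variational step gives the optimal path measure directly and makes optimality transparent.

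First, by Girsanov's theorem applied to \cref{apdx/eq/soc_sde} with $X_0\sim p_{\text{data}}$ in both processes, the running cost equals a path-space KL:
\[
\mathbb{E}_{p^u}\Big[\int_0^1 \tfrac12\|u_t(X_t)\|^2\,\rd t\Big]=D_{\mathrm{KL}}(p^u\,\|\,p^{\text{base}}),
\]
as already used in \cref{eq/sec2/sb_soc_eq}.

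Next, define the tilted measure
\[
p^\star(X_{[0,1]}) := p^{\text{base}}(X_{[0,1]})\,\exp\!\big(-g(X_1)+V_0(X_0)\big).
\]
Because $\mathbb{E}_{p^{\text{base}}}[e^{-g(X_1)}\mid X_0]=e^{-V_0(X_0)}$ by \cref{apdx/eq/value}, this $p^\star$ is a probability measure with $X_0$-marginal $p_{\text{data}}$. For any admissible $u$, write the objective \cref{apdx/eq/soc_obj} as
\[
D_{\mathrm{KL}}(p^u\|p^{\text{base}})+\mathbb{E}_{p^u}[g(X_1)] = D_{\mathrm{KL}}(p^u\|p^\star)-\mathbb{E}_{p_{\text{data}}}[V_0(X_0)].
\]
The last term does not depend on $u$, since the initial law is fixed. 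Hence the objective is minimized exactly when $p^u=p^\star$. Projecting onto $(X_0,X_1)$ gives \cref{apdx/eq/optimal_joint} immediately.

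It remains to show that $p^\star$ is generated by a Markov control of the form $-\sigma_t\nabla V_t$. Set $\varphi_t(x)=e^{-V_t(x)}=\mathbb{E}_{p^{\text{base}}}[e^{-g(X_1)}\mid X_t=x]$. By Feynman--Kac, $\varphi$ solves the backward Kolmogorov equation $\partial_t\varphi+f\cdot\nabla\varphi+\tfrac12\sigma^2\Delta\varphi=0$ with $\varphi_1=e^{-g}$. Then $M_t=\varphi_t(X_t)/\varphi_0(X_0)$ is a $p^{\text{base}}$-martingale, and by Itô,
\[
\rd M_t=M_t\,\sigma_t\nabla\log\varphi_t(X_t)\cdot\rd W_t .
\]
Since $M_1=e^{-g(X_1)+V_0(X_0)}$ is exactly the density $\rd p^\star/\rd p^{\text{base}}$, Girsanov shows that under $p^\star$ the process has drift $f_t+\sigma_t^2\nabla\log\varphi_t=f_t-\sigma_t^2\nabla V_t$. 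Matching this with \cref{apdx/eq/soc_sde} gives $u^\star_t=-\sigma_t\nabla V_t$.

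Equivalently, one can verify the result through the HJB equation $\partial_tV+f\cdot\nabla V+\tfrac12\sigma^2\Delta V-\tfrac12\sigma^2\|\nabla V\|^2=0$, which the Hopf--Cole transform linearizes into the Kolmogorov equation above.

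The main obstacle is technical regularity rather than the algebra. The argument needs $\varphi_t>0$ and $\varphi\in C^{1,2}$ so that Itô's formula applies. It also needs Novikov-type integrability for the stochastic exponential to be a true martingale. Finally, the minimizing $p^\star$ must be realized by an admissible Markov control. I would assume enough smoothness and growth conditions on $f,\sigma,g$ (for instance, $g$ bounded below with Gaussian-type growth, such as $g=E+\log\hat\varphi_1$ in the SB setting) to justify these steps, and otherwise treat them as standard.
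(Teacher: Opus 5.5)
Your proof is correct, but it follows a different route from the paper. The paper gives no derivation. It attributes the result to dynamic programming: write the HJB equation, minimize the Hamiltonian pointwise to get $u^\star_t=-\sigma_t\nabla V_t$, and linearize with the Hopf--Cole substitution $\varphi_t=e^{-V_t}$ to get the Feynman--Kac formula for $V_t$. The joint-law formula is then simply stated, without a separate argument.

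You instead start from the Gibbs/Donsker--Varadhan decomposition on path space. This buys two things:
\begin{itemize}
\item It proves global optimality over all admissible controls directly. The HJB route needs a separate verification theorem for that.
\item It gives the tilted path measure outright, so the joint formula follows by marginalizing onto $(X_0,X_1)$.
\end{itemize}
In your argument, Feynman--Kac and HJB are needed only to show that $p^\star$ is realized by the Markov feedback $-\sigma_t\nabla V_t$. Your martingale $\varphi_t(X_t)/\varphi_0(X_0)$ plus Girsanov does this cleanly. The HJB route, by contrast, produces the feedback form immediately, which is all the paper needs downstream.

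There is one small slip. The decomposition should read
\[
D_{\mathrm{KL}}(p^u\|p^{\text{base}})+\mathbb{E}_{p^u}[g(X_1)]=D_{\mathrm{KL}}(p^u\|p^\star)+\mathbb{E}_{p_{\text{data}}}[V_0(X_0)],
\]
with a plus sign on the last term. You can check this with $g\equiv c$: then $V\equiv c$, $p^\star=p^{\text{base}}$, and the optimal cost is $c$. The sign does not affect the argmin. With the correct sign, the optimal cost equals $\mathbb{E}_{p_{\text{data}}}[V_0]$, consistent with $V$ being the cost-to-go.
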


\textbf{Proof of Proposition~\ref{prop/sec3/optimal_memoryless}.}
\begin{proof}
Consider the memoryless condition
\begin{equation}
    \label{apdx/eq/memoryless_base}
    p^{\text{base}}_{0,1}(X_0,X_1)\overset{\text{memoryless}}{:=}p^{\text{base}}_0(X_0)\,p^{\text{base}}_1 (X_1).
\end{equation}

Under the memoryless condition~\eqref{eq/sec3/memoryless_base}, the initial value function~\eqref{apdx/eq/value} becomes:
\begin{equation}
\label{apdx/eq/value_memless}
V_0(X_0)
\overset{\text{memoryless}}{=}-\log \int p^{\text{base}}(X_1)\exp(-g(X_1))\,\rd X_1.
\end{equation}
Note that right-hand side is constant to $X_0$. Substituting \eqref{apdx/eq/value_memless} and \eqref{apdx/eq/memoryless_base} into \eqref{apdx/eq/optimal_joint} yields the factorization
\begin{equation}
p^\star(X_0,X_1)=p^{\text{base}}(X_0)\,\frac{p^{\text{base}}(X_1)\exp(-g(X_1))}{\int p^{\text{base}}(X_1)\exp(-g(X_1))\,\rd X_1}.
\end{equation}
As a result, $X_0$ and $X_1$ are independent under $p^\star$, directly indicating that \emph{non-independent} optimal couplings cannot be recovered under the memoryless condition.
\end{proof}


\section{Terminal Cost in SOC-based Sampling Problem}
\label{apdx/terminal_cost}

\textbf{Case of Memoryless Base SDE.}
To remove the bias introduced by $V_0(X_0)$ in \cref{apdx/eq/optimal_joint}, a common approach is the adoption of memoryless condition~\eqref{apdx/eq/memoryless_base}. 
Starting from \cref{apdx/eq/optimal_joint},
\begin{align}
\label{apdx/eq/optimal_x1}
    p^*(X_1) &= \int p^*(X_0, X_1) \,\rd X_0 = \int p^{\text{base}}(X_0, X_1)\,\text{exp}(-g(X_1) + V_0(X_0)) dX_0\\ 
    &= \int p^{\text{base}}(X_0)\,p^{\text{base}}(X_1)\,\text{exp}(-g(X_1) + V_0(X_0)) \,\rd X_0  \\
    &\propto p^{\text{base}}(X_1)\,\text{exp}(-g(X_1)) = p_{\text{prior}}(X_1),
\end{align}
where second equality holds for memoryless condition~\eqref{apdx/eq/memoryless_base}. As a result, we can set terminal cost as 
\begin{equation}
g(x)=\log \frac{p^{\text{base}}_1(x)}{p_{\text{prior}}(x)}
\end{equation}
for SOC-based sampling problem with memoryless base SDE~\cite{zhang2021pis, peluchetti2023diffusion, havens2025as}.

\textbf{Case of Non-Memoryless Base SDE.} ASBS~\cite{liu2025asbs} generalize the sampling problem to non-memoryless dynamic. To resolve the bias by initial value function $V_0(X_0)$ in \cref{apdx/eq/optimal_x1} without reliance on memoryless condition~\cref{apdx/eq/memoryless_base}, they further deploy the SB optimality.

\begin{theorem}\label{apdx/theorem/SB_optimality} (Optimal control in SB problem) Under the SB optimality~\cite{pavon1991sb_opt,chen2021sb_opt,caluya2021sb_opt}, the optimal control is 
\begin{equation}
\scalebox{0.95}{$
\label{apdx/eq/optimal_control}
{u}_{t}^\star(x)=\sigma_t \nabla_x \log\varphi_t(x),\quad {v}_{t}^\star(x)=\sigma_t \nabla_x \log\hat\varphi_t(x)
$}
\end{equation} 
where $\varphi_t,\hat\varphi_t \in C^{1,2}([0,1],\mathbb{R}^d)$ are SB potentials satisfying
\begin{equation*}
\label{apdx/eq/sb_potentials}
\scalebox{0.95}{$
\begin{aligned}
\varphi_t(x) &= \int p^{\mathrm{base}}_{1|t}(y\mid x)\,\varphi_1(y)\,dy,
& \varphi_0(x)\,\hat{\varphi}_0(x) &= p_{\mathrm{prior}}(x), \\
\hat{\varphi}_t(x) &= \int p^{\mathrm{base}}_{t|0}(x\mid y)\,\hat{\varphi}_0(y)\,dy,
& \varphi_1(x)\,\hat{\varphi}_1(x) &= p_{\mathrm{data}}(x).
\end{aligned}
$}
\vspace{-0.5em}
\end{equation*}
\end{theorem}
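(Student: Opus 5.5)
The plan is to reduce the dynamic SB problem to a static one, solve the static problem by a multiplicative (Schrödinger) factorization, and then read off both controls from a Doob $h$-transform and its time reversal. A side remark on conventions: since the SDE in \cref{eq/sec2/controlled_sde_fwd} starts from $p_{\text{data}}$ at $t=0$ and ends at $p_{\text{prior}}$ at $t=1$, the boundary conditions come out as $\varphi_0\hat\varphi_0 = p_{\text{data}}$ and $\varphi_1\hat\varphi_1 = p_{\text{prior}}$. I read the displayed system as this pair of conditions with the labels swapped, and I will prove it in that form. Throughout I assume positivity and $C^{1,2}$ regularity of the potentials, as the statement does.

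\textbf{Step 1 (disintegration).} Write $D_{\mathrm{KL}}(p\,\|\,p^{\text{base}})$ as a static term plus a bridge term:
\[
D_{\mathrm{KL}}(p_{0,1}\,\|\,p^{\text{base}}_{0,1}) + \mathbb{E}_{p_{0,1}}\bigl[D_{\mathrm{KL}}(p_{\cdot|0,1}\,\|\,p^{\text{base}}_{\cdot|0,1})\bigr].
\]
The bridge term can be set to zero without affecting the marginal constraints. Hence the optimizer has the reciprocal form \cref{eq/sec2/reciprocal_process}, and only the static problem remains: minimize $D_{\mathrm{KL}}(\pi\,\|\,p^{\text{base}}_{0,1})$ over couplings $\pi$ of $p_{\text{data}}$ and $p_{\text{prior}}$.

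\textbf{Step 2 (static Schrödinger system).} Introduce Lagrange multipliers for the two marginal constraints; equivalently, invoke the Csiszár/Fortet characterization. The minimizer then factorizes as
\[
\pi^\star(x_0,x_1) = p^{\text{base}}_0(x_0)\,a(x_0)\,p^{\text{base}}_{1|0}(x_1\mid x_0)\,b(x_1).
\]
Set $\hat\varphi_0 := p^{\text{base}}_0\, a$ and $\varphi_1 := b$. Propagate them as in the statement:
\[
\varphi_t(x)=\int p^{\text{base}}_{1|t}(y\mid x)\,\varphi_1(y)\,dy,\qquad \hat\varphi_t(x)=\int p^{\text{base}}_{t|0}(x\mid y)\,\hat\varphi_0(y)\,dy.
\]
Integrating $\pi^\star$ over $x_1$ gives $\hat\varphi_0\varphi_0 = p_{\text{data}}$. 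Integrating over $x_0$ gives $\hat\varphi_1\varphi_1 = p_{\text{prior}}$. Combining with the Markov property of the base process, the time-$t$ marginal of $p^\star$ is $p^\star_t = \varphi_t\hat\varphi_t$.

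\textbf{Step 3 (forward control).} By construction, $dp^\star/dp^{\text{base}} = \bigl(\hat\varphi_0/p^{\text{base}}_0\bigr)(X_0)\,\varphi_1(X_1)$. The function $\varphi_t$ is space-time harmonic for the base generator, i.e.
\[
\partial_t\varphi + f\cdot\nabla\varphi + \tfrac12\sigma^2\Delta\varphi = 0,
\]
since it is a conditional expectation of $\varphi_1(X_1)$. Itô's formula then shows that $\varphi_t(X_t)/\varphi_0(X_0)$ is the Girsanov density of the drift change $\sigma_t^2\nabla\log\varphi_t$. Comparing with \cref{eq/sec2/controlled_sde_fwd} identifies $u^\star_t = \sigma_t\nabla\log\varphi_t$, which is the Doob $h$-transform.

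\textbf{Step 4 (backward control).} Apply the time-reversal formula to this forward SDE. The reversed drift is
\[
f + \sigma^2\nabla\log\varphi_t - \sigma^2\nabla\log p^\star_t = f - \sigma^2\nabla\log\hat\varphi_t,
\]
using $p^\star_t=\varphi_t\hat\varphi_t$. This matches \cref{eq/sec2/controlled_sde_bwd} with $v^\star_t=\sigma_t\nabla\log\hat\varphi_t$.

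\textbf{Main obstacle.} The algebra above is routine. The delicate part is Step 2: existence and positivity of the factors $a,b$, and the justification that the Girsanov density in Step 3 is a true martingale rather than only a local martingale. I would cite the classical existence results for the Schrödinger system (finite-entropy assumption, Fortet/Beurling, or Sinkhorn/IPF convergence as in \citet{ruschendorf1995convergence}) rather than reprove them. The remaining smoothness is covered by the stated $C^{1,2}$ hypothesis.
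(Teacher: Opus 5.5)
Your proof is correct, and it proves the right version of the theorem. With $X_0\sim p_{\text{data}}$ and the propagation rules as displayed, the identity $p^\star_t=\varphi_t\hat\varphi_t$ forces $\varphi_0\hat\varphi_0=p_{\text{data}}$ and $\varphi_1\hat\varphi_1=p_{\text{prior}}$. The statement has the two boundary labels swapped, and your corrected form is the one the paper itself relies on later: $\varphi_1=e^{-V_1}=e^{-g}$ together with $g=\log(\hat\varphi_1/p_{\text{prior}})$ gives exactly $\varphi_1\hat\varphi_1=p_{\text{prior}}$.

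The paper gives no proof of this theorem; it imports the result from \cite{pavon1991sb_opt,chen2021sb_opt,caluya2021sb_opt}. The derivation in those works, notably Chen--Georgiou--Pavon and Caluya--Halder, is the control/PDE route:
\begin{itemize}
\item First-order optimality conditions for the marginal-constrained control problem give $u^\star=\sigma\nabla\psi$, with $\psi$ solving an HJB equation coupled to the Fokker--Planck equation for $p^\star_t$.
\item The Hopf--Cole (Fleming) transform $\varphi=e^{\psi}$, $\hat\varphi=p^\star_t e^{-\psi}$ turns this nonlinear pair into backward and forward Kolmogorov equations, whose integral forms are the displayed propagation rules.
\item $v^\star=\sigma\nabla\log\hat\varphi$ then follows from the same time-reversal computation you do in Step 4.
\end{itemize}

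Your route is the entropic/probabilistic one (F\"ollmer, L\'eonard): disintegrate the KL, solve the static Schr\"odinger system, then read $u^\star$ off a Doob $h$-transform. It has three advantages over the PDE route:
\begin{itemize}
\item It never touches the nonlinear HJB equation.
\item It explains why the optimizer is reciprocal, as in \cref{eq/sec2/reciprocal_process}, and why it is Markov. Hence restricting to feedback controls $u_t(X_t)$ loses nothing.
\item It produces $p^\star_t=\varphi_t\hat\varphi_t$ directly, which is the identity behind the paper's transform $\hat\varphi_t=e^{V_t}p^\star_t$.
\end{itemize}
The PDE route, in turn, ties $\varphi_t=e^{-V_t}$ directly to the value function of \cref{apdx/theorem/soc_optimality}, which is what the paper exploits downstream. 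The technical caveats you name are the right ones and are handled in the cited literature: existence and positivity of the Schr\"odinger factors, the true-martingale property of the Girsanov density, and regularity for time reversal.
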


Under the SOC optimality~\cref{apdx/theorem/soc_optimality} and SB optimality~\cref{apdx/theorem/SB_optimality}, we can obtain the transform
\begin{equation}
\varphi_t(x)=\text{exp}(-V_t(x)), \quad\hat\varphi_t(x) = \text{exp}(V_t(x))\,p^\star_t(x),
\end{equation}
which connects between SOC value function and SB potentials.
This leads to setting terminal cost as
\begin{equation}
\label{apdx/eq/terminal_cost_nm}
g(x)=\log \frac{\hat{\varphi}_1(x)}{p_{\text{prior}}(x)}.
\end{equation}
Applying Adjoint Matching (AM)~\cite{domingo2024am} to SOC objective~\eqref{apdx/eq/soc_obj} with this terminal cost~\eqref{apdx/eq/terminal_cost_nm} under VP base SDE yields
\begin{equation}
    u^* = \arg\min_u \mathbb{E}_{p^{\bar{u}}} \left[ \|u_t(X_t) + \kappa_t\sigma_t (\nabla E(X_1) + \nabla \log \hat{\varphi}_1(X_1)) \|^2 \right], 
\end{equation}
where we use the notation in \cref{alg:vp_asbm}. However, we need an access to $v\nabla \log \hat{\varphi}_1(x))$ to make this objective feasible. To resolve this problem, Corrector Matching (CM) is introduced, which can be derived by variational form of $\nabla \log \hat{\varphi}_1(x)$:
\begin{equation}
    \nabla \log \hat{\varphi}_1 = \arg\min_{h} \mathbb{E}_{p_{0,1}^*} \left[ \left\| h(X_1) - \nabla_{x_1} \log p^{\mathrm{base}}(X_1|X_0) \right\|^2 \right].
\end{equation}

Finally, following the adoption of reciprocal projection~\cite{havens2025as}, we can alternately train these two objectives with parameterized models as
\begin{equation}
\label{apdx/eq/asbs_am}
\min\limits_{\theta}\;
\mathbb{E}_{\,p^{\mathrm{base}}_{t\mid 0,1},\,p^{{\bar{ u}^{\theta}}}_{0,1}}
\left[
\left\|\,{u}_{t}^{\theta}(X_t) + \big(\sigma_t\nabla E + {\bar{v}_{1}^{\phi}} \big)(X_1)\right\|^2
\right],
\end{equation}
\begin{equation}
\label{apdx/eq/asbs_cm}
\min\limits_{\phi}\;
\mathbb{E}_{\,p^{{\bar{u}^{\theta}}}_{0,1}}
\left[
\left\|\,v_{1}^{\phi}(X_1) - \sigma_{1}\nabla_{x_1}\log p^{\text{base}}(X_1\mid X_0)\right\|^2
\right].
\end{equation}

\section{Distillation}
\label{apdx/distillation}
\begin{figure}[t]
    \centering
    \includegraphics[width=\linewidth]{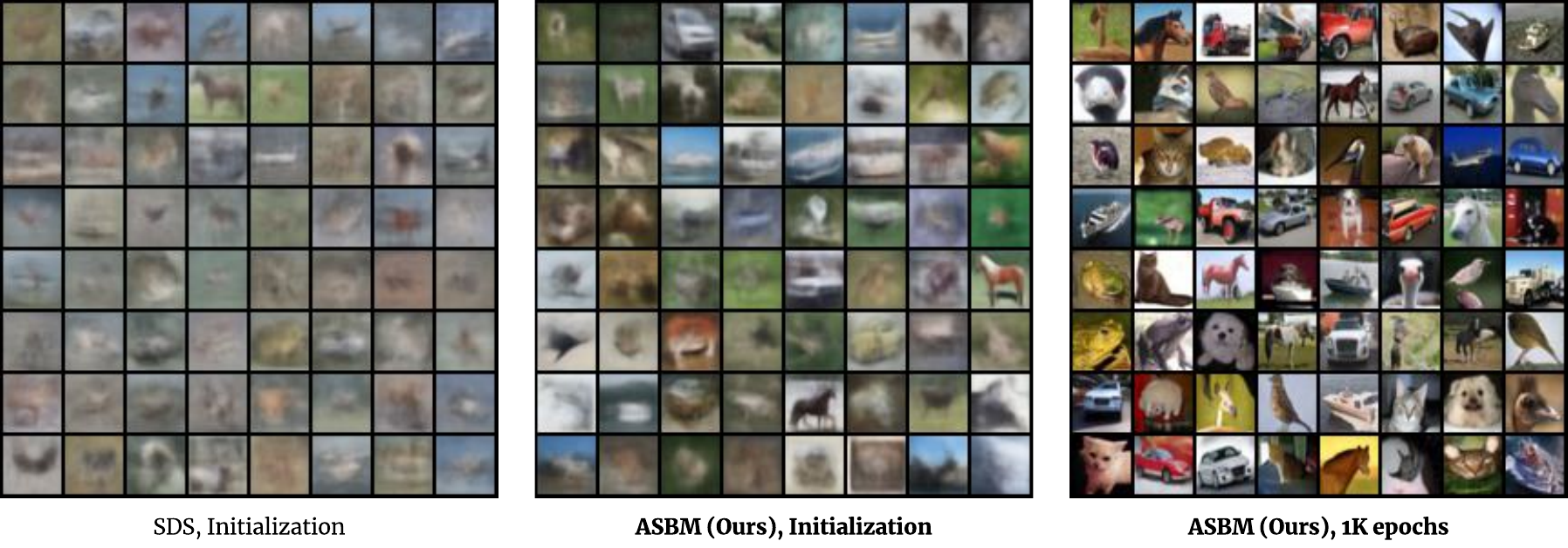}
    \caption{\textbf{Initialization of one-step generator (Uncurated generation).} As discussed in \cref{sec/method/distillation}, ASBM shows clear initialization for one-step generator, indicating its straighter and more organized trajectory. On the other hand, score-based initialization gives much more noisy initialization even with timestep shifting~\cite{yin2024dmd1}.}
    \label{fig:tweedie}
\end{figure}
\textbf{Derivation for SB Distillation Framework.}
Upon successful training, we are given the pretrained backward dynamic,
\begin{equation}
\label{apdx/eq/controlled_sde_bwd}
\rd X_t \!=\! \big[f_t(X_t) - \sigma_t v_{t}^{\phi}(X_t)\big]\,\rd t + \sigma_t\,\rd W_t, \ \ X_1 \!\sim\! p_{\text{prior}},
\end{equation}
which approximately transports prior distribution to data distribution $p_{\text{data}}$. Denote the path measure induced by backward dynamic~\eqref{apdx/eq/controlled_sde_bwd} as $p^\phi$.

Now consider the one-step generator $G^\psi:\mathbb{R}^d \times \mathbb{R}^d\rightarrow \mathbb{R}^d$, which defines the output distribution,
\begin{equation}
\label{apdx/eq/law_g}
p^\psi_{0} := \text{Law}\big(G^\psi(X_1,z)\big),
\end{equation}
where $X_1\sim p_{\text{prior}},\ z\sim \mathcal{N}(0,I)$. Although we define the one-step generator as $G^\psi:\mathbb{R}^d\rightarrow \mathbb{R}^d$ which only takes $X_1$ in main paper for simplicity, it is more general to define it as \cref{apdx/eq/law_g} to consider the stochasticity via noise $z$.

The most straightforward way is to minimize the KL divergence between $p^{\phi}_0$ and $p^\psi_0$, which is infeasible. 
Instead, we distill our learned backward control $v_t^{\phi}$ in continuous-time control space.

Assume an optimal backward control $v^{\xi} : [0,1]\,\times\, \mathbb{R}^d \rightarrow \mathbb{R}^d$ such that its corresponding controlled SDE
\begin{equation}
\rd X_t = \big[(f_t(X_t)-\sigma_t v^{\xi}_t(X_t)\big])\,\rd t + \sigma_t\,\rd W_t,
\,\,\, X_1\sim p_{\text{prior}},
\end{equation}
induces the terminal marginal $X_0\sim p_{0}^\xi \equiv p_{0}^\psi$. Let $p^\xi$ denote its path measure.
Then, by data processing inequality, KL divergence between terminal distributions is bounded by
\begin{equation}
\label{apdx/eq/distill_path_kld}
D_{\text{KL}}\!\big(p_0^\xi\,\|\,p_0^{\phi}\big)
\;\le\;
D_{\text{KL}}\!\big(p^{\xi}\,\|\,p^{\phi}\big).
\end{equation}
Girsanov’s theorem~\cite{sarkka2019appliedSDE} turns the path-space KL divergence~\eqref{apdx/eq/distill_path_kld} into a tractable drift-matching loss that can be estimated from sampled trajectories:
\begin{equation}
\label{}
\frac{1}{2}\,\mathbb{E}_{p^\xi}\!\left[\int_{0}^{1}\big\|v_t^{\xi}(X_t)-v_t^{\phi}(X_t)\big\|^{2}\,dt\right].
\end{equation}
Since we assume $v_t^\xi$ to be optimal, $p^\xi$ follows reciprocal process, leading to,
\begin{equation}
\label{apdx/eq/g_update}
\min_{\psi} \; \mathbb{E}_{p^{\text{base}}_{t|0,1},\,X_0, \,X_1} \bigl[ \bigl\| {\bar{v}_t^{\xi}}(X_t)-{\bar{v}_{t}^{\phi}}(X_t) \bigr\|^2 \bigr],
\end{equation}
where $X_0\sim G^\psi(X_1,z), \,X_1\sim p_{\text{prior}}$. 

Then, inspired by the distillation frameworks in DMs~\cite{poole2022sds,yin2024dmd1}, we initialize $v_t^{\xi}$ from the pretrained $v_t^\phi$ and dynamically update it with bridge matching loss to reflect the continuously changing $p_0^\psi$:
\begin{equation}
\label{apdx/eq/fake_control_update}
\scalebox{0.95}{$
\min\limits_{\xi} \; \mathbb{E}_{p^{\text{base}}_{t|0,1},\,X_0, \,X_1}\bigl[ \bigl\|v_t^{\xi}(X_t)
-\sigma_{t}\nabla_{x_t}\log p^{\text{base}}(X_t\mid X_0)\bigr\|^2 \bigr],
$}
\end{equation}
where $X_0\sim G^\psi(X_1,z), \,X_1\sim p_{\text{prior}}$. As a result, we mainly optimize $G^\psi$ via \cref{apdx/eq/g_update} while dynamically updating $v_t^{\xi}$ via \cref{apdx/eq/fake_control_update}. 

With memoryless condition~\eqref{apdx/eq/memoryless_base}, reciprocal process in \cref{apdx/eq/g_update} and \cref{apdx/eq/fake_control_update} collapses to conditional path in standard diffusion model, and our distillation framework becomes exactly same as score distillation framework~\cite{poole2022sds,yin2024dmd1}.

\textbf{Initialization for One-Step Generator.} As discussed in \cref{sec/method/distillation}, ASBM provides strong initialization for one-step generator $G^\psi$ with Tweedie's formula~\eqref{eq/distillation_init} due to its significantly straighter and efficiently organized trajectory. As shown in \cref{fig:tweedie}, ASBM yields clearly better one-step estimate than score-based initialization, which makes the distillation easier than that from memoryless diffusion.

\textbf{Better Mode Coverage of ASBM.} Score-distillation models suffer from mode collapse problem~\cite{yin2024dmd1} which inevitably requires further refinement through regression loss~\cite{yin2024dmd1} or adversarial loss~\cite{yin2024dmd2}. Especially, regression loss requires generation of huge amount of noise-image pairs from original score model which is highly costly. As shown in \cref{fig:mode_collapse}, pure score distillation (SDS)~\cite{poole2022sds} exhibits severe mode collapse. Even with an additional heavy regression loss, DMD~\cite{yin2024dmd1} can still collapse. In contrast, ASBM substantially mitigates mode collapse, which we attribute to its organized and efficient trajectories, as demonstrated in \cref{sec/exp/distillation}.
\begin{figure}[t]
    \centering
    \includegraphics[width=\linewidth]{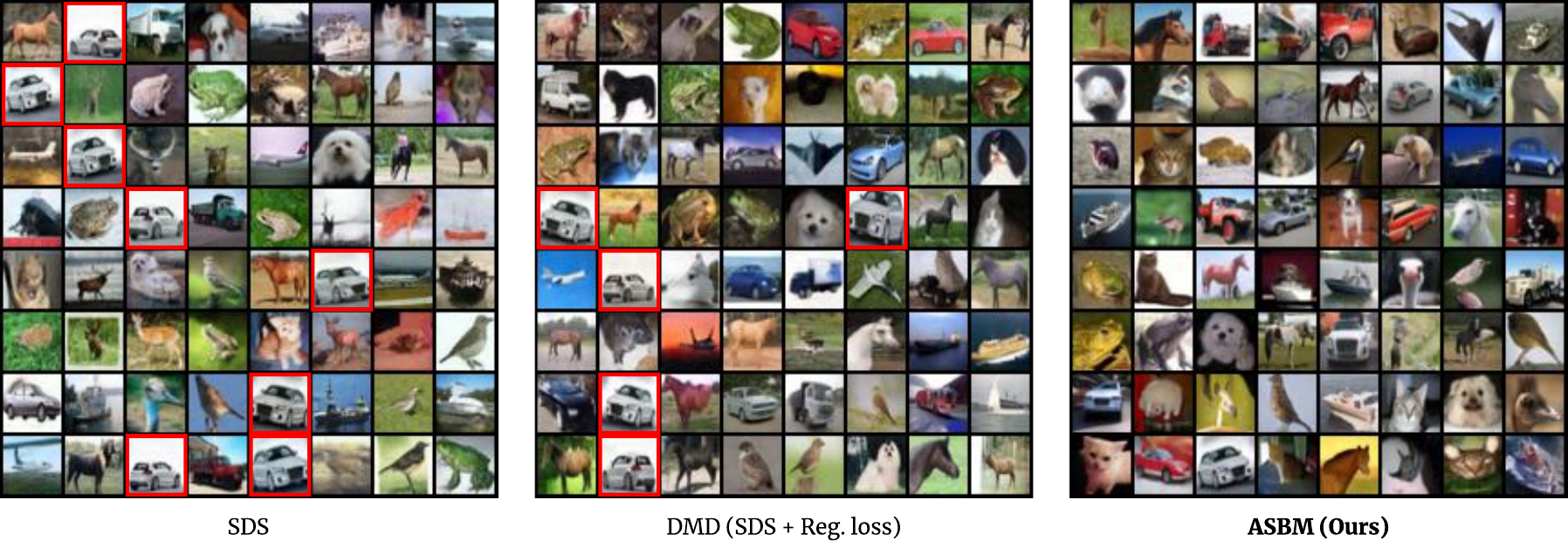}
    \caption{\textbf{Mode collapse in distillation (Uncurated generation).} ASBM shows strong mode coverage in distillation task while the score distillation models still suffer from mode collapse even with costly regression loss.}
    \label{fig:mode_collapse}
\end{figure}

\section{Experiment Settings}
\label{apdx/exp_setting}
\textbf{Model Architecture.} We use UNet~\cite{ronneberger2015unet} architecture following the hyperparameters in \cite{tong2023unetconfig}. For backward dynamic, we use 4 residual blocks for each channel following the Score SDE~\cite{song2020sm} and 2 residual blocks for our forward dynamic which significantly reduces the computation cost. For LDM experiment, we employ Stable Diffusion 3~\cite{esser2024sd3} autoencoder. We set batch size as 128.

\textbf{Training Environment.} We conduct all the experiments with a single NVIDIA A100 40 GB.

\textbf{Training Time.} It takes about 4 days for ASBM (2100 epochs) and 6 days for Score SDE (3300 epochs) on single A100.


\end{document}